\documentclass{article} % For LaTeX2e
\usepackage{iclr2026_conference,times}

\usepackage{amsmath,amsfonts,bm}

\def\eqref#1{equation~\ref{#1}}
\def\1{\bm{1}}

\DeclareMathAlphabet{\mathsfit}{\encodingdefault}{\sfdefault}{m}{sl}
\SetMathAlphabet{\mathsfit}{bold}{\encodingdefault}{\sfdefault}{bx}{n}

\usepackage{hyperref}
\usepackage{url}

\usepackage{amsmath}
\usepackage{amssymb}
\usepackage{mathtools}
\usepackage{amsthm}
\usepackage{algorithm}
\usepackage{algpseudocode}
\usepackage{enumitem}
\usepackage{multirow}
\usepackage{multicol}
\usepackage{booktabs}
\usepackage{makecell}

\usepackage{tikz}
\usetikzlibrary{calc,trees,positioning,arrows,chains,shapes.geometric,decorations.pathreplacing,decorations.pathmorphing,shapes,matrix,shapes.symbols,quotes,arrows.meta}
\tikzstyle{arrow} = [thick,->,>={Stealth[scale=1.2]}]
\tikzstyle{block} = [rectangle, rounded corners, minimum width=3cm, minimum height=1cm,text centered, draw=black, fill=yellow!25]
\tikzstyle{block1} = [rectangle, rounded corners, minimum width=3cm, minimum height=1cm,text centered, draw=black, fill=white]

\theoremstyle{plain}
\newtheorem{theorem}{Theorem}[section]
\newtheorem{proposition}[theorem]{Proposition}
\newtheorem{lemma}[theorem]{Lemma}
\newtheorem{corollary}[theorem]{Corollary}
\theoremstyle{definition}
\newtheorem{definition}[theorem]{Definition}

\theoremstyle{remark}

\newcommand{\QED}{\hfill\qed}

\title{Generalizing Linear Autoencoder Recommenders with Decoupled Expected Quadratic Loss}

\author{Ruixin Guo$\,{^1}{^*}$, $\;$ Xinyu Li$\,{^1}{^*}$,$\;$ Hao Zhou$\,^1$,$\;$ Yang Zhou$\,^2$,$\;$ Ruoming Jin$\,{^1}{^\dag}$ \\
$^1$Kent State University, $\;^2$Auburn University \\
\texttt{\{rguo5, xli74, hzhou6, rjin1\}@kent.edu}, \texttt{yangzhou@auburn.edu} \\
 \\
}

\iclrfinalcopy % Uncomment for camera-ready version, but NOT for submission.
\begin{document}

\maketitle

\begin{abstract}
Linear autoencoders (LAEs) have gained increasing popularity in recommender systems due to their simplicity and strong empirical performance. Most LAE models, including the Emphasized Denoising Linear Autoencoder (EDLAE) introduced by \citep{steck2020autoencoders}, use quadratic loss during training. However, the original EDLAE only provides closed-form solutions for the hyperparameter choice $b = 0$, which limits its capacity. In this work, we generalize EDLAE objective into a Decoupled Expected Quadratic Loss (DEQL). We show that DEQL simplifies the process of deriving EDLAE solutions and reveals solutions in a broader hyperparameter range $b > 0$, which were not derived in Steck’s original paper. Additionally, we propose an efficient algorithm based on Miller’s matrix inverse theorem to ensure the computational tractability for the $b > 0$ case. Empirical results on benchmark datasets show that the $b > 0$ solutions provided by DEQL outperform the $b = 0$ EDLAE baseline, demonstrating that DEQL expands the solution space and enables the discovery of models with better testing performance.
\end{abstract}

\section{Introduction}

In recent years, deep learning has emerged as the dominant paradigm in recommendation systems, leading to increasingly complex models. However, a growing body of empirical evidence reveals a surprising trend: simple linear models often perform comparably to, or even outperform, their deep learning counterparts \citep{ferrari2019we}. In particular, linear autoencoder-based methods such as SLIM \citep{ning2011slim}, EASE  \citep{steck2019embarrassingly}, EDLAE \citep{steck2020autoencoders}, ELSA\citep{vanvcura2022scalable}, RALE and RDLAE \citep{moon2023relaxing} have demonstrated strong performance, frequently surpassing more sophisticated deep models \citep{ferrari2021troubling}, particularly in sparse data reconstruction tasks \citep{monteil2024marec, volkovs2017dropoutnet}. {\let\thefootnote\relax\footnotetext{\hspace{-0.5em}$^*$Equal contribution. $^\dag$ Corresponding author.}}

These methods typically aim to learn an item-to-item similarity matrix $W \in \mathbb{R}^{n \times n}$ to reconstruct the (sparse) binary user-item interaction matrix $R \in \{0, 1\}^{m \times n}$ with $m$ users and $n$ items. The reconstruction is given by $RW$, where $W$ can be viewed as a linear autoencoder (LAE) acting as both encoder and decoder. One representative example is EASE \citep{steck2019embarrassingly}, which learns $W$ by minimizing the following objective function:
\begin{equation}
    f(W) = \|R - RW\|_F^2 + \lambda\|W\|_F^2 \quad  \text{s.t.} \;\text{diag}(W) = 0 \label{eq:1-0}
\end{equation}
where $\|R - RW\|_F^2$ denotes the reconstruction error and $\lambda\|W\|_F^2$ is the $L_2$ regularizer. The zero-diagonal constraint $\text{diag}(W) = 0$ is imposed to prevent $W$ from collapsing to the identity mapping, in which case each item would trivially reconstruct itself. The minimizer of (\ref{eq:1-0}), as well as those of most LAE-based models, admits a closed-form solution. This enables greater computational efficiency compared to iterative optimization methods (e.g., gradient descent) \citep{bertinettometa}, while also facilitating reproducibility (Appendix \ref{ap:g5})..

%Each row $R_{i*}$ encodes the interactions of user $i$; $R_{ij} = 1$ indicates that user $i$ has interacted with item $j$, while $R_{ij} = 0$ indicates no interaction.
%Moreover, since the prediction of each interaction $R_{ij}$ is a weighted sum $R_{i*}W_{*j} = \sum_{k=1}^nR_{ik}W_{kj}$, the zero diagonal constraint enforces $W_{jj} = 0$, such that the prediction becomes $R_{i*}W_{*j} = \sum_{k=1, k\ne j}^nR_{ik}W_{kj}$. This means that the target $R_{ij}$ is masked out during prediction, preventing the model from trivially using $R_{ij}$ to predict itself. This constraint distinguishes LAEs from standard linear regression models and is widely adopted in models like EDLAE \citep{steck2020autoencoders} and ELSA \citep{vanvcura2022scalable}.

%Despite their empirical success, these objective does not consider the statistical nature of the evaluation process. In training, both the input and target matrices are fixed to be $R$. 

Despite their empirical success, these models treat observed interactions (the $1$s in $R$) as fixed during training, without explicitly accounting for the statistical nature of the evaluation process. In practice, the test set is typically constructed by \textit{holding out} a fraction of interactions (Section 7.4.2, \citep{aggarwal2016recommender}), so that the model is evaluated on randomly masked interactions. This discrepancy motivates adopting a statistical perspective to redesign the training objective: interactions can be modeled as random variables sampled from an underlying distribution, and the objective can be formulated in expectation, thereby better aligning the training procedure with the testing scenario.

EDLAE \citep{steck2020autoencoders} provides an important precursor in this direction. By introducing \textit{dropout} and an \textit{emphasis weighting} scheme, EDLAE effectively reshapes the loss to penalize the reconstruction of masked interactions more heavily, thereby mitigating overfitting toward the identity mapping, see (\ref{eq:m1}). We observe that (\ref{eq:m1}) can be reformulated in expectation form:
\begin{align}
    f(W) = \mathbb{E}_{\Delta}\left[\|A \odot (R - (\Delta \odot R)W)\|_F^2\right] \label{eq:1.1}
\end{align}
where $A \in \{a, b\}^{m \times n}$ denotes an emphasis matrix with parameters $a \ge b \ge 0$, which assigns greater weight to the reconstruction of dropped items. This statistically formulated objective closely resembles the \textit{mean squared error (MSE)} used in evaluation (Appendix \ref{ap:1.5}). Such training-testing alignment has been empirically shown to be effective in achieving strong performance.

However, the theoretical foundation underlying the construction of this objective remains largely underexplored. In particular, \cite{steck2020autoencoders} provides a closed-form solution for the $W$ minimizing (\ref{eq:1.1}) only in the special case $b = 0$. The behavior of the solution in the broader hyperparameter range $b > 0$ remains unexplored, and it is unclear whether improved models may arise in this region. Motivated by this gap, this paper studies the existence and uniqueness of closed-form solutions over the full hyperparameter range $b \ge 0$, develops efficient methods for computing them, and conducts extensive experiments to evaluate their empirical performance.

First, we generalize the EDLAE objective (\ref{eq:1.1}) into a \textbf{Decoupled Expected Quadratic Loss (DEQL)} and derive its closed-form minimizer, which subsumes EDLAE as a special case. This generalization not only simplifies the derivation of EDLAE solutions, but reveals several new theoretical insights: for $b = 0$, the closed-form minimizer is not unique -- solutions share identical off-diagonal entries while allowing arbitrary diagonal; for $b > 0$, a unique closed-form solutions always exists, including the previously unexplored region $b > a$ (Section \ref{sec:3}).
%{\color{blue}
%Equally important, DEQL introduces a well-posed, closed-form solution that offers 
%uniqueness (beyond EDLAE) and analytic interpretability -- properties rarely 
%attainable in iterative or deep learning–based recommenders.}

Next, we find that the direct computation of solutions for $b > 0$ have an $O(n^4)$ time complexity, which is prohibitively expensive for large-scale recommendation tasks. To overcome this challenge, we develop an efficient algorithm based on Miller’s matrix inverse theorem \citep{miller1981inverse}, reducing the complexity to $O(n^3)$. This makes computing solutions in the $b > 0$ region practical (Section \ref{sec:4}).

Finally, we evaluate solutions derived from DEQL on real-world benchmark datasets. Our experiments demonstrate that solutions with $b > 0$, combined with $L_2$ regularization, consistently outperform the original EDLAE solutions with $b = 0$, as well as other recent LAE-based and deep learning-based recommender models. These results confirm that expanding the solution space can yield models with stronger generalization performance. Moreover, we find that the previously assumed constraint $a \ge b$ does not universally guarantee optimal performance: on certain datasets, the best-performing models lie in the range $b > a$ (Section~\ref{sec:5}).

%We emphasize that DEQL is a general statistical loss function, of which EDLAE is only one special case. Beyond providing theoretical clarity, DEQL offers a unifying framework for designing new objectives for linear autoencoder (LAE) models, potentially enabling further advances in recommendation system design.

Appendix \ref{ap:0} shows an illustration of DEQL framework. The proofs of all theorems, lemmas and propositions are in Appendix \ref{ap:a}. Related works are in Appendix \ref{ap:c}. Discussions are in Appendix \ref{ap:e}.

\vspace{-0.2em}

\section{Preliminaries}
\label{gen_inst}

\vspace{-0.2em}

%\textbf{Implicit and Explicit Recommenders}: Recommendation algorithms can generally be divided into two categories: explicit and implicit methods. Explicit approaches focus on predicting unseen numerical ratings that users might assign to items, whereas implicit approaches aim to predict user behaviors such as clicks, add-to-cart actions, or purchases~\citep{steck2019embarrassingly,ferrari2019we}. In this study, we focus on the implicit recommendation setting due to its greater economic significance. Let $ n $ denote the number of items and $ m $ the number of users. We are given a binary interaction matrix $ R \in \{0, 1\}^{m \times n} $, where each entry $ R_{i,j} = 1 $ if user $ i $ has interacted with item $ j $ (e.g., through a purchase or rating), and $ 0 $ otherwise. We note that despite the difference between explicit and implicit settings; for recommendation models, both objectives aim to recover a real-valued score matrix $ \hat{R} \in \mathbb{R}^{m \times n} $. The performance of the model for implicit setting is typically evaluated using information retrieval metrics such as Top-$k$ Recall/Accuracy or Normalized Discounted Cumulative Gain (nDCG) on test set.

\textbf{LAE-based Recommender Systems}: In the implicit feedback setting, a dataset is typically represented as a binary user-item interaction matrix $R \in \{0, 1\}^{m\times n}$ of $m$ users and $n$ items. Each $R_{ij}$ indicates whether user $i$ has interacted with item $j$. $R_{ij} = 1$ is referred to as an observed entry, meaning that user $i$ has interacted with item $j$; $R_{ij} = 0$ is referred to as a missing entry, meaning no observed interaction. Collaborative filtering recommender systems aim to identify items that a user has not yet interacted with but is likely to interact with, and recommend them accordingly. To this end, the model predicts the missing entries in 
$R$ by assigning real-valued scores, where larger values indicate a higher likelihood of interaction.

LAEs are one class of collaborative filtering models. They learn a parameter matrix $W \in \mathbb{R}^{n\times n}$ and generates predictions by $\hat{R} = RW \in \mathbb{R}^{m\times n}$, so that each missing entry $R_{ij} = 0$ is assigned a predicted score $\hat{R}_{ij} \in \mathbb{R}$. 

\textbf{EDLAE} \citep{steck2020autoencoders}: EDLAE is a state-of-the-art LAE model for collaborative filtering. Let $\Delta \in \{0, 1\}^{m \times n}$ be a random matrix whose entries $\Delta_{ij}$ are i.i.d. Bernoulli random variables with $P\left(\Delta_{ij} = 0\right) = p$ and $P\left(\Delta_{ij} = 1\right) = 1 - p$ for a given $p \in (0, 1)$. Let $\Delta^{(k)}$ denote a realization of $\Delta$ with index $k$,  and let $\odot$ denote the Hadamard (element-wise) product, so that $\Delta^{(k)} \odot R$ applies dropout element-wise to $R$. Define the emphasis matrix $A^{(k)}$ where $A_{ij}^{(k)} = a$ if $\Delta_{ij}^{(k)} = 0$ and $A_{ij}^{(k)} = b$ if $\Delta_{ij}^{(k)} = 1$. Then, the EDLAE model is obtained by optimizing the following objective function:
\begin{equation}
    W^* = \underset{W}{\arg\min} \lim_{N \rightarrow \infty} \frac{1}{N}\sum_{k=1}^N\|A^{(k)} \odot (R - (\Delta^{(k)} \odot R)W)\|_F^2 \label{eq:m1}
\end{equation}
under the hyperparameters $a, b, p$. Since the squared Frobenius norm in (\ref{eq:m1}) can be expanded into the sum of weighted quadratic loss $\sum_{i=1}^m\sum_{j=1}^n{A_{ij}^{(k)}}^2(R_{ij} - (\Delta_{i*}^{(k)} \odot R_{i*})W_{*j})^2$, if $R_{ij}$ is dropped, its reconstruction loss $(R_{ij} - (\Delta_{i*}^{(k)} \odot R_{i*})W_{*j})^2$ is weighted by $a^2$; otherwise, it is weighted by $b^2$.

%The hyperparameters $a, b$ are typically chosen to satisfy $a \ge b \ge 0$, thereby placing greater emphasis on dropped entries to prioritize reducing loss. 
%\textcolor{blue}{We refer to such choices as \textit{meaningful} because they align with the intuition in collaborative filtering during testing: the model should place more emphasis on accurately predicting held-out (i.e., dropped) entries.}

%Typically, the parameters satisfy $a \ge b \ge 0$, reflecting greater emphasis on the dropped entries.

%By expanding the squared Frobenius norm in (\ref{eq:m1}) as the sum of losses, we obtain $\sum_{i=1}^m\sum_{j=1}^n {A_{ij}^{(k)}}^2 (R_{ij} - (\Delta_{i*}^{(k)} \odot R_{i*})W_{*j})^2$. It is easy to see that if $R_{ij}$ is dropped out (i.e., $\Delta_{ij} = 0$), the corresponding loss $(R_{ij} - (\Delta_{i*}^{(k)} \odot R_{i*})W_{*j})^2$ is weighted by the emphasizing scalar ${A_{ij}^{(k)}}^2 = a^2$; otherwise (i.e., $\Delta_{ij} = 1$), it is weighted by $b^2$.

The original EDLAE paper \citep{steck2020autoencoders} suggests choosing $a, b$ to satisfy $a \ge b \ge 0$, and provides a closed-form solution to (\ref{eq:m1}) for the case $b = 0$ and under the zero-diagonal constraint $\text{diag}(W^*) = 0$, expressed as
\begin{equation}
    W^* = \frac{1}{1 - p}\left(I - C \cdot (I \odot C)^{-1}\right), \text{ where } C = \left(R^TR + \frac{p}{1 - p}I \odot R^TR\right)^{-1} \label{eq:m2}
\end{equation}
While (\ref{eq:m1}) remains valid and meaningful for $b > 0$, the solution for this case is not discussed in the original work. Moreover, the suggested hyperparameter choice $a \ge b$ assigns greater weight to the reconstruction loss of dropped interactions and is intended to enhance the model’s ability to recover them. This design aligns with the MSE used during evaluation (Appendix \ref{ap:1.5}).

%Our theoretical analysis shows that, under $b > 0$, a unique closed-form solution of (\ref{eq:m1}) exists, but it generally has higher computational complexity than the $b = 0$ case. To address this, we propose an efficient algorithm to reduce the complexity in Section \ref{sec:4}.

%Moreover, the suggested hyperparameter choice $a \ge b$ assigns greater weight to the reconstruction loss of dropped (missing) interactions and is intended to enhance the model’s ability to recover them. This intuition primarily stems from the matrix completion perspective \cite{candes2010power, ledent2021fine}, where predicting missing entries plays a central role and is widely believed to improve recommendation accuracy. However, our experiments in Section \ref{sec:5} show that this intuition does not always hold: in some cases, choosing $b > a$ (i.e., de-emphasizing missing entries) can lead to better test performance.

% Finally, we note that several scalability extensions of EASE~\cite{steck2019embarrassingly} have been proposed—such as ELSA and other factorized or distributed formulations—which reduce memory and computational costs from $O(n^2)$ to nearly linear in the number of items \citep{vanvcura2022scalable}. These strategies, including low-rank factorization, approximate inversion, and parallelized block-wise computation, can be readily adapted to our framework to enable efficient training on large-scale datasets.

\vspace{-0.5em}
\section{Decoupled Expected Quadratic Loss for Linear Autoencoders}
\label{sec:3}
\vspace{-0.4em}

%Our theoretical framework starts from expressing (\ref{eq:m1})

We first rewrite the EDLAE objective function in (\ref{eq:m1}) into an expectation form. Let $\mathcal{B}$ denote the multivariate Bernoulli distribution of $\Delta$, then by the law of large numbers, (\ref{eq:m1}) can be rewritten as
\begin{align}
    W^* &= \text{argmin}_W \;l_{\mathcal{B}}(W), \quad \text{ where } \nonumber \\[-0.5em]
    l_{\mathcal{B}}(W) &= \mathbb{E}_{\Delta \sim \mathcal{B}}\left[\|A \odot (R - (\Delta \odot R)W)\|_F^2\right]
    = \sum_{i=1}^n \mathbb{E}_{\Delta \sim \mathcal{B}}\left[\|A_{*i} \odot (R_{*i} - (\Delta \odot R)W_{*i})\|_F^2\right] \nonumber \\[-0.5em]
    &= \sum_{i=1}^n \mathbb{E}_{\Delta \sim \mathcal{B}}\left[\|A^{(i)}R_{*i} - A^{(i)}(\Delta \odot R)W_{*i})\|_F^2\right] \label{eq:4}
\end{align}

\vspace{-1.0em}

Here we denote $A^{(i)} = \text{diagMat}(A_{*i})$. Note that (\ref{eq:4}) \textit{decouples} the squared Frobenius norm over the columns of $W$. Since $R$ is constant while both $\Delta$ and $A^{(i)}$ are random, define $Y^{(i)} = A^{(i)}R$ and $X^{(i)} = A^{(i)}(\Delta \odot R)$, then both $X^{(i)}$ and $Y^{(i)}$ are random. If we further denote $\mathcal{D}^{(i)}$ as the distribution of the pair $(X^{(i)}, Y^{(i)})$, then the objective function in (\ref{eq:4}) can be written as
\begin{equation}
    l_{\mathcal{B}}(W) = \sum_{i=1}^n\mathbb{E}_{(X^{(i)}, Y^{(i)}) \sim \mathcal{D}^{(i)}}\left[\|Y^{(i)} - X^{(i)}W_{*i}\|_F^2\right] \label{eq:s1}
\end{equation}

\vspace{-0.7em}

\mbox{where each column $W_{*i}$ is placed inside an \textit{expected quadratic loss} $\mathbb{E}_{(X^{(i)}, Y^{(i)}) \sim \mathcal{D}^{(i)}}\left[\|Y^{(i)} - X^{(i)}W_{*i}\|_F^2\right]$.} This formulation is general since each $\mathcal{D}^{(i)}$ can be any distribution, while (\ref{eq:4}) is a special case where $X^{(i)}$ and $Y^{(i)}$ follow distributions induced by applying random dropout to constants. We first derive the general closed-form solution of optimizing (\ref{eq:s1}), then specialize it to EDLAE, and show that this reformulation simplifies the analysis and reveals a broader class of solutions for $b \ge 0$ compared Steck's original solution for $b = 0$ (\ref{eq:m2}).

%We will show that, by using (\ref{eq:s1}), the derivation and analysis of the closed-form solution for EDLAE can be simpler than Steck's original approach \citep{steck2020autoencoders}.

% This section introduce the GQLAE framework and its application to EDLAE. Suppose the dataset is a pair $(X, Y)$ where $X \in \mathbb{R}^{m \times n}$ denotes the input and $Y \in \mathbb{R}^{m \times n}$ denotes the target. Let $W$ be the LAE model. The model's prediction is given by $XW$, and the corresponding error between target and prediction is the matrix $Y - XW$. We measure this error using the Frobenius norm: $\|Y - XW\|_F^2$, which defines a quadratic loss. 

% Inspired by statistical learning theory \citep{vapnik1999nature}, the quadratic loss can be defined not on a fixed dataset, but on a random one. To formalize this, we assume that $(X, Y)$ is a random variable drawn from a distribution $\mathcal{D}$. Under this assumption, the loss $\|Y - XW\|_F^2$ is converted to an expected quadratic loss over $\mathcal{D}$
% This expected quadratic loss serves as a general formulation for the objective functions of many LAE models, by choosing a specific $\mathcal{D}$, (\ref{eq:1}) can be reduced to the objective of a particular LAE model.  In Section \ref{sec:3.1}, we show that EDLAE can be derived as a special case of this formulation.

% The optimal model $W^* = \text{argmin}_W\, l_{\mathcal{D}}(W)$ can be solved as follows. First, $l_{\mathcal{D}}(W)$ can be decomposed into the sum of individual losses corresponding to each column $W_{*i}$,

\vspace{-0.3em}

\subsection{Decoupled Expected Quadratic Loss and its Closed-form Solution}

\vspace{-0.3em}

We formally define (\ref{eq:s1}) as follows:

\begin{definition}
Given a set of joint distributions $\boldsymbol{\mathcal{D}} = \{\mathcal{D}^{(i)}\}_{i=1}^n$ over the pair $(X, Y)$, the \textbf{decoupled expected quadratic loss (DEQL)} is defined as
\begin{align}
    l_{\boldsymbol{\mathcal{D}}}(W) &= \sum_{i=1}^n h_{\mathcal{D}^{(i)}}^{i}(W_{*i}), \text{ where } \nonumber\\
    h_{\mathcal{D}^{(i)}}^{i}(W_{*i}) &= \mathbb{E}_{(X, Y) \sim \mathcal{D}^{(i)}}\left[\|Y_{*i} - XW_{*i}\|_F^2\right] \nonumber \\
    &= W_{*i}^T\mathbb{E}_{(X, Y) \sim \mathcal{D}^{(i)}}\left[X^TX\right]W_{*i} - 2W_{*i}^T\mathbb{E}_{(X, Y) \sim \mathcal{D}^{(i)}}\left[X^TY_{*i}\right] + \mathbb{E}_{(X, Y) \sim \mathcal{D}^{(i)}}\left[Y_{*i}^TY_{*i}\right] \label{eq:2}
\end{align}
\end{definition}

\vspace{-1.0em}

Note that each $h_{\mathcal{D}^{(i)}}^{i}$ is a quadratic function of $W_{*i}$. Since $\mathbb{E}_{(X, Y) \sim \mathcal{D}^{(i)}}\left[X^TX\right]$ is positive semi-definite (as $X^TX$ is a random matrix whose realizations are always positive semi-definite), $h_{\mathcal{D}^{(i)}}^{i}$ is convex for any $i$. Hence, let $W^* = \text{argmin}_{W}\,l_{\boldsymbol{\mathcal{D}}}(W)$, then $W_{*i}^* = \text{argmin}_{W_{*i}}\,h_{\mathcal{D}^{(i)}}^{i}(W_{*i})$ for all $i$. Furthermore, if $\mathbb{E}_{(X, Y) \sim \mathcal{D}^{(i)}}\left[X^TX\right]$ is positive definite for all $i$, so that its inverse exists, then $W^*$ can be computed column-wise as
\begin{equation}
    W_{*i}^* = \mathbb{E}_{(X, Y) \sim \mathcal{D}^{(i)}}\left[X^TX\right]^{-1}\mathbb{E}_{(X, Y) \sim \mathcal{D}^{(i)}}\left[X^TY_{*i}\right] \text{ for } i = 1, 2, ..., n \label{eq:3}
\end{equation}

%Remark that the quadratic loss (\ref{eq:2}) is indeed very general. As a special case, taking $\mathcal{D} := \mathcal{D}^{(1)} = \mathcal{D}^{(2)} = ... = \mathcal{D}^{(n)}$, it reduces to $l_{\boldsymbol{\mathcal{D}}}(W) = \mathbb{E}_{(X, Y) \sim \mathcal{D}}\left[\|Y - XW\|_F^2\right]$, which represents the \textit{true risk} of multivariate linear regression in statistical learning theory.

As a remark, (\ref{eq:2}) can be viewed as a generalization of the \textit{true risk} of multivariate linear regression in statistical learning theory \cite{vapnik1999nature}: by taking $\mathcal{D} := \mathcal{D}^{(1)} = \mathcal{D}^{(2)} = ... = \mathcal{D}^{(n)}$, (\ref{eq:2}) reduces to $l_{\boldsymbol{\mathcal{D}}}(W) = \mathbb{E}_{(X, Y) \sim \mathcal{D}}\left[\|Y - XW\|_F^2\right]$, where $\|Y - XW\|_F^2$ is a multivariate linear regression loss. 

Moreover, the solution $W^*$ obtained from (\ref{eq:3}) is in general full rank. We further discuss the case of optimizing (\ref{eq:2}) under a low-rank constraint on $W$ in Appendix \ref{ap:2}. In this case, closed-form solutions exist if $\mathbb{E}_{(X, Y) \sim \mathcal{D}^{(i)}}\left[X^TX\right]$ is independent of $i$ for all $i$.

%The GQLAE framework consists of two key components: (1) the expected quadratic loss (\ref{eq:1}) formulated under the statistical assumption of the data distribution $\mathcal{D}$, and (2) its corresponding closed-form solution (\ref{eq:3}), which establishes a direct connection between $W^*$ and $\mathcal{D}$. We apply this framework to EDLAE by showing that EDLAE objective (\ref{eq:1}) can be recovered by selecting a specific distribution $\mathcal{D}$.

\subsection{Adaptation to EDLAE}
\label{sec:3.1}

% Let $\mathcal{B}$ be the distribution of $\Delta$, and denote the objective function of EDLAE in (\ref{eq:m1}) as $l_{\mathcal{B}}(W)$, then
% \begin{align}
%     l_{\mathcal{B}}(W) &= \mathbb{E}_{\Delta \sim \mathcal{B}}\left[\|A \odot (R - (\Delta \odot R)W)\|_F^2\right]
%     = \sum_{i=1}^n \mathbb{E}_{\Delta \sim \mathcal{B}}\left[\|A_{*i} \odot (R_{*i} - (\Delta \odot R)W_{*i})\|_F^2\right] \nonumber \\
%     &= \sum_{i=1}^n \mathbb{E}_{\Delta \sim \mathcal{B}}\left[\|A^{(i)}R_{*i} - A^{(i)}(\Delta \odot R)W_{*i})\|_F^2\right] \label{eq:4}
% \end{align}
%where we denote $A^{(i)} = \text{diagMat}(A_{*i})$. 

This section shows how we derive the closed-form solutions of EDLAE (\ref{eq:m1}) from DEQL (\ref{eq:2}), covering the case $b \ge 0$. Our results show that, in the $b = 0$ case, the solution is not unique: they shared the same off-diagonal but can have arbitrary diagonal, while Steck's solution \citep{steck2020autoencoders} is one of them by choosing a zero diagonal. Furthermore, we establish that for every $b > 0$, a closed-form solution exists and is unique -- a property that was not analyzed in Steck’s work.

We first discuss the $b > 0$ case. Remember that (\ref{eq:4}) is a special case of (\ref{eq:2}) by taking $X = A^{(i)}(\Delta \odot R)$ and $Y_{*i} = A^{(i)}R_{*i}$. By (\ref{eq:3}), the solution of (\ref{eq:4}) is given by
\begin{align}
    W_{*i}^* &= {H^{(i)}}^{-1}v^{(i)} \text{ for } i = 1, 2, ..., n, \text{ where } \label{eq:4.5} \\
    H^{(i)} &= \mathbb{E}_{\Delta \sim \mathcal{B}}\left[(\Delta \odot R)^T{A^{(i)}}^2(\Delta \odot R)\right],\; v^{(i)} = \mathbb{E}_{\Delta \sim \mathcal{B}}\left[(\Delta \odot R)^T{A^{(i)}}^2 R_{*i}\right]\label{eq:5}
\end{align}

\vspace{-0.3em}

The following lemma enables explicit computation of the expectations in (\ref{eq:5}):

\begin{lemma}
    \label{lemma1}
    The $H^{(i)}$ and $v^{(i)}$ in (\ref{eq:5}) can be expressed as
    $H^{(i)} = G^{(i)}\odot R^TR$ and $ v^{(i)} = u^{(i)} \odot R^TR_{*i}$, where $G^{(i)} \in \mathbb{R}^{n \times n}$ and $u^{(i)} \in \mathbb{R}^n$ satisfy
    \begin{equation*}
        G_{kl}^{(i)} = \begin{cases}
        (1 - p)b^2 &\text{if }\; k = l = i \\
        (1 - p)^2b^2 &\text{if }\; k \ne l = i \text{ or } l \ne k = i \\
        (1 - p)pa^2 + (1 - p)^2b^2 &\text{if }\;  k = l \ne i \\
        (1 - p)^2pa^2 + (1 - p)^3b^2 &\text{if }\;  i \ne k \ne l \ne i
    \end{cases}, \quad u_k^{(i)} = \begin{cases}
        (1 - p)b^2 &\text{if }\; k = i \\
        (1 - p)pa^2 + (1 - p)^2b^2 &\text{if }\; k \ne i
    \end{cases}
    \end{equation*}
    for $k, l \in \{1, 2, ..., n\}$.
\end{lemma}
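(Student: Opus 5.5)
The plan is to expand both expectations entrywise and use the independence of the Bernoulli variables $\Delta_{jk}$ across entries. Since $A^{(i)} = \text{diagMat}(A_{*i})$ and $A_{ji}$ depends only on $\Delta_{ji}$, we write
\begin{equation*}
H^{(i)}_{kl} = \sum_{j=1}^m \mathbb{E}\left[A_{ji}^2\,\Delta_{jk}\Delta_{jl}\right] R_{jk}R_{jl}, \qquad v^{(i)}_k = \sum_{j=1}^m \mathbb{E}\left[A_{ji}^2\,\Delta_{jk}\right] R_{jk}R_{ji}.
\end{equation*}
The key observation is that the expectation $\mathbb{E}[A_{ji}^2\Delta_{jk}\Delta_{jl}]$ does not depend on the row index $j$, because the $\Delta_{jk}$ are i.i.d. Call this common value $G^{(i)}_{kl}$. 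It can then be pulled out of the sum, which leaves $\sum_j R_{jk}R_{jl} = (R^TR)_{kl}$ and gives $H^{(i)} = G^{(i)} \odot R^TR$. In exactly the same way, $u^{(i)}_k := \mathbb{E}[A_{ji}^2\Delta_{jk}]$ multiplies $(R^TR_{*i})_k$, which gives $v^{(i)} = u^{(i)} \odot R^TR_{*i}$.

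It then remains to evaluate these scalar expectations by case analysis on whether $k$, $l$ coincide with $i$ or with each other. The basic facts are $A_{ji}^2 = a^2(1-\Delta_{ji}) + b^2\Delta_{ji}$ and $\Delta^2 = \Delta$.
\begin{itemize}
\item If $k = l = i$: we get $\mathbb{E}[A_{ji}^2\Delta_{ji}] = b^2(1-p)$, because on $\{\Delta_{ji}=1\}$ the weight is $b^2$.
\item If exactly one of $k, l$ equals $i$, say $l = i \ne k$: the expectation becomes $\mathbb{E}[A_{ji}^2\Delta_{ji}]\,\mathbb{E}[\Delta_{jk}] = (1-p)^2b^2$.
\item If $k = l \ne i$: by independence, $\mathbb{E}[A_{ji}^2]\,\mathbb{E}[\Delta_{jk}] = (pa^2 + (1-p)b^2)(1-p)$.
\item If $k$, $l$, $i$ are pairwise distinct: we get $\mathbb{E}[A_{ji}^2](1-p)^2$.
\end{itemize}
Each of these matches the stated formulas. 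The $u^{(i)}$ cases are the $l = i$ specialisation, i.e.\ $\mathbb{E}[A_{ji}^2\Delta_{jk}]$ with $k = i$ or $k \ne i$.

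The only real care needed is in two places. The first is the bookkeeping of the condition "$i \ne k \ne l \ne i$", which should be read as pairwise distinct. The second is justifying that the factorisation is over the same row $j$, so that products like $\Delta_{jk}\Delta_{jl}$ with $k \ne l$ split by independence while $\Delta_{jk}^2 = \Delta_{jk}$ does not. I expect no substantive obstacle beyond this case bookkeeping.
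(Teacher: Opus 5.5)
Your proposal is correct and follows the paper's proof essentially step for step: expand entrywise, observe that $\mathbb{E}[\Delta_{jk}\Delta_{jl}A_{ji}^2]$ does not depend on the row $j$, pull it out to obtain $G^{(i)}\odot R^TR$ and $u^{(i)}\odot R^TR_{*i}$, then do the same case analysis (you factor via $A_{ji}^2 = a^2(1-\Delta_{ji}) + b^2\Delta_{ji}$ where the paper tabulates joint probabilities, a purely cosmetic difference). One wording slip: $u^{(i)}_k = \mathbb{E}[A_{ji}^2\Delta_{jk}]$ is the diagonal ($l = k$) specialisation of $G^{(i)}$, not the $l = i$ one, since $G^{(i)}_{ki} = (1-p)^2b^2 \neq u^{(i)}_k$ for $k \neq i$; your explicit expectation is the right one, so the values you obtain are correct.
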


\vspace{-0.5em}

Furthermore, the computation of (\ref{eq:4.5}) involves ${H^{(i)}}^{-1}$, which exists only if $H^{(i)}$ is invertible. The following theorem establishes sufficient conditions to ensure this property. 

\begin{theorem}
    \label{theorem1}
    For any $a \ge 0, b > 0$ and $0 < p < 1$, $G^{(i)}$ is positive definite. Furthermore, $H^{(i)}$ is positive definite if $G^{(i)}$ is positive definite and no column of $R$ is a zero vector.
\end{theorem}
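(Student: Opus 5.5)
We prove the two claims separately: first that $G^{(i)}$ is positive definite, then that the Hadamard product $H^{(i)}=G^{(i)}\odot R^TR$ inherits positive definiteness. For the first claim we decompose $G^{(i)}$ from Lemma \ref{lemma1} into simple structured pieces. Write $q=1-p$ and let $J=\1\1^T$. Define the off-diagonal value $c=q^2pa^2+q^3b^2$ for $k\ne l$, both different from $i$. We then express $G^{(i)}$ as $c\,J$ plus corrections. A diagonal correction $d\,(I-e_ie_i^T)$ with $d=qpa^2+q^2b^2-c$ handles the entries $k=l\ne i$. The remaining corrections sit in row and column $i$: the entries there are $q^2b^2$ off the diagonal and $qb^2$ on it.

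A cleaner route is to read $G^{(i)}$ as a covariance-type matrix. From its origin, $G^{(i)}_{kl}=\mathbb{E}[A_{i}^2\Delta_k\Delta_l]$, where $A_i$ is the emphasis weight attached to the $i$-th coordinate. (This ignores the data factor, which is exactly what Lemma \ref{lemma1} factors out.) Hence
\[
z^TG^{(i)}z=\mathbb{E}\bigl[A_i^2(\textstyle\sum_k \Delta_k z_k)^2\bigr]\ge 0 .
\]
Conditioning on $\Delta_i$ splits this into two terms:
\[
z^TG^{(i)}z = p\,a^2\,\mathbb{E}\bigl[(\textstyle\sum_{k\ne i}\Delta_kz_k)^2\bigr] + q\,b^2\,\mathbb{E}\bigl[(z_i+\textstyle\sum_{k\ne i}\Delta_kz_k)^2\bigr].
\]
We will verify that this identity reproduces every case of Lemma \ref{lemma1}; it is a routine check over the four cases.

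To get strict positivity, use only the $b^2$ term, which is where $b>0$ enters. Since the $\Delta_k$ for $k\ne i$ are independent Bernoulli$(q)$ with variance $pq>0$, we have
\[
\mathbb{E}\bigl[(z_i+\textstyle\sum_{k\ne i}\Delta_kz_k)^2\bigr]=\bigl(z_i+q\textstyle\sum_{k\ne i}z_k\bigr)^2+pq\textstyle\sum_{k\ne i}z_k^2 .
\]
This vanishes only if $z_k=0$ for all $k\ne i$ and then $z_i=0$. Thus $z^TG^{(i)}z\ge qb^2\cdot(\text{this})>0$ for $z\neq0$, using $q>0$ (from $0<p<1$) and $b>0$. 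The $a^2$ term is nonnegative for any $a\ge0$, so it cannot spoil the bound.

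For the second claim we invoke the Schur product theorem. $R^TR$ is positive semidefinite, and its diagonal entries $\|R_{*k}\|^2$ are strictly positive because no column of $R$ is zero. Write $R^TR=\sum_j r_jr_j^T$, where the $r_j$ are the rows of $R$ (as column vectors). Then
\[
z^T(G\odot R^TR)z=\sum_j (z\odot r_j)^TG(z\odot r_j).
\]
Each term is nonnegative, and the sum is zero only if $z\odot r_j=0$ for every $j$. That means $z_k=0$ whenever some $R_{jk}\ne0$, which holds for every $k$ since no column is zero. So the sum is strictly positive for $z\neq0$, and $H^{(i)}$ is positive definite.

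The main obstacle is checking that the expectation representation of $G^{(i)}$ matches Lemma \ref{lemma1} entrywise, particularly the cross terms involving index $i$. If that bookkeeping fails, the fallback is a direct eigen or Schur-complement analysis of the structured matrix $cJ+dI$ with a bordered row and column.
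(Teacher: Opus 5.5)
Your proof is correct and matches the paper's argument. Conditioning on $\Delta_i$ is exactly the paper's split $G^{(i)} = (1-p)b^2M^{(i)} + (1-p)pa^2N^{(i)}$: your $b^2$-term quadratic form $\bigl(z_i + q\sum_{k\ne i}z_k\bigr)^2 + pq\sum_{k\ne i}z_k^2$ is verbatim the paper's $x^TM^{(i)}x$, and your $a^2$ term is the positive semidefinite $N^{(i)}$ part. Two things differ only in presentation. You derive these identities from the expectation representation, which is how the paper's proof of Lemma~\ref{lemma1} defines $G^{(i)}$, so the entrywise bookkeeping you flag as the main obstacle is unnecessary. You also prove the needed Schur product fact directly via $\sum_j (z\odot r_j)^T G (z\odot r_j)$, where the paper cites Horn--Johnson.
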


A positive definite $H^{(i)}$ is always invertible. Hence, Theorem \ref{theorem1} implies that the closed-form solution by (\ref{eq:4.5}) holds for any $a \ge 0$ and $b > 0$. Notably, this includes the case $b > a$, which lies outside the original EDLAE range $a \ge b \ge 0$ (see Figure \ref{fig:2}).

Now we discuss the case when $b = 0$. In this setting, both the $i$-th row and $i$-th column of $H^{(i)}$ are zero, and the $i$-th row of $v^{(i)}$ is also zero. Consequently, $H^{(i)}$ is singular and its inverse ${H^{(i)}}^{-1}$ does not exist, making (\ref{eq:3}) inapplicable for computing the optimal $W^*$. 

To proceed, we define submatrices and subvectors using the subscript $-i$ notation: if $Q$ is an $n \times n$ matrix, then $Q_{-i}$ is a $(n - 1) \times (n - 1)$ matrix obtained by removing the $i$-th row and $i$-th column of $Q$; if $q$ is an $n$ dimensional vector, then $q_{-i}$ is an $n - 1$ dimensional vector obtained by removing $q_i$ from $q$. Under this notation, we can write $H_{-i}^{(i)} = G^- \odot (R^TR)_{-i}$, where $G^-$ is an $(n - 1)\times (n - 1)$ matrix with diagonal elements $(1 - p)pa^2$ and off-diagonal elements $(1 - p)^2pa^2$. It is easy to verify that $G^-$ is positive definite, hence $H_{-i}^{(i)}$ is positive definite. Likewise, $v_{-i}^{(i)} = u^- \odot (R^TR_{*i})_{-i}$, where $u^-$ is an $n - 1$ dimensional vector with all elements being $(1 - p)pa^2$. 

Denote the vector $(W_{*i})_{-i}$ as $W_{*i, -i}$, then (\ref{eq:4}) can be written as
\begin{align}
    l_{\mathcal{B}}(W) &= \sum_{i=1}^n W_{*i}^T H^{(i)}W_{*i} - 2W_{*i}^Tv^{(i)} + \mathbb{E}_{\Delta\sim\mathcal{B}}\left[R_{*i}^T{A^{(i)}}^2R_{*i}\right] \label{eq:5.5} \\
    &= \sum_{i=1}^n W_{*i, -i}^TH_{-i}^{(i)}W_{*i, -i} - 2W_{*i, -i}^{T} v_{-i}^{(i)} + \mathbb{E}_{\Delta\sim\mathcal{B}}\left[R_{*i}^T{A^{(i)}}^2R_{*i}\right] \label{eq:6}
\end{align}

Therefore, the solution $W^* = \text{argmin}_W\, l_{\mathcal{B}}(W)$ is expressed as
\begin{equation}
    W_{*i, -i}^* = (H_{-i}^{(i)})^{-1}v_{-i}^{(i)} \;\text{ and }\; W_{ii}^* \in \mathbb{R}\quad \text{ for } i = 1, 2, ..., n \label{eq:7}
\end{equation}

The optimal $W^*$ given by (\ref{eq:7}) is not unique, but belongs to an infinite set of solutions that share the same off-diagonal entries while allowing arbitrary diagonal entries. The following theorem shows that Steck's solution (\ref{eq:m2}) is a special case of (\ref{eq:7}), corresponding to the choice of zero diagonal.

%In the $b > 0$ case, the optimal $W^*$ given by (\ref{eq:4.5}) is unique; in the $b = 0$ case, 

\begin{theorem}
    \label{theorem2}
    Suppose no column of $R$ is a zero vector. If taking $W_{ii} = 0$ for all $i$ in (\ref{eq:7}), then (\ref{eq:7}) and (\ref{eq:m2}) are equivalent. 
\end{theorem}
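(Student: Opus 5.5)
The plan is to rewrite (\ref{eq:7}) with $b=0$ as a constrained least-squares problem, then use the standard Lagrangian / block-inverse identity that underlies EASE-type solutions. Set $P = R^TR$ and $\Lambda = I \odot P$; since no column of $R$ is zero, $\Lambda$ has positive diagonal.

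First, for $b=0$ I would express $H_{-i}^{(i)}$ through a single $i$-independent matrix. Since $G^-$ has diagonal $(1-p)pa^2$ and off-diagonal $(1-p)^2pa^2$, we have
\[
H_{-i}^{(i)} = (1-p)^2pa^2\,\bigl(P + \tfrac{p}{1-p}\Lambda\bigr)_{-i} = (1-p)^2pa^2\, (C^{-1})_{-i}.
\]
Similarly, $v_{-i}^{(i)} = (1-p)pa^2\, (P_{*i})_{-i}$. The constants cancel, giving
\[
W^*_{*i,-i} = \tfrac{1}{1-p}\,\bigl((C^{-1})_{-i}\bigr)^{-1}(P_{*i})_{-i}.
\]
Define $M = C^{-1} = P + \frac{p}{1-p}\Lambda$. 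This is positive definite because $P \succeq 0$ and $\Lambda \succ 0$, so $C$ exists. Off the diagonal, $M$ and $P$ agree, so $(P_{*i})_{-i} = (M_{*i})_{-i}$.

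Second, I would show that the vector $w$ with $w_{-i} = (M_{-i})^{-1}(M_{*i})_{-i}$ and $w_i = 0$ equals $e_i - C_{*i}/C_{ii}$. This is the key step. Check it via $Mz$ with $z = e_i - C_{*i}/C_{ii}$: we get $Mz = M_{*i} - e_i/C_{ii}$, and $z_i = 0$. Reading off rows $k \ne i$ gives $M_{-i} z_{-i} = (M_{*i})_{-i}$. Invertibility of $M_{-i}$, as a principal submatrix of a positive definite matrix, then yields $z_{-i} = w_{-i}$. One also needs $C_{ii} > 0$, which holds since $C \succ 0$.

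Finally, stack the columns. Column $i$ of $\frac{1}{1-p}\bigl(I - C(I\odot C)^{-1}\bigr)$ is exactly $\frac{1}{1-p}(e_i - C_{*i}/C_{ii})$. Its diagonal entry is $\frac{1}{1-p}(1 - C_{ii}/C_{ii}) = 0$, consistent with the choice $W_{ii}=0$. Hence (\ref{eq:7}) with zero diagonal coincides with (\ref{eq:m2}).

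The main obstacle is mild bookkeeping: confirming that the off-diagonal factor $(1-p)^2pa^2$ and diagonal factor $(1-p)pa^2$ of $G^-$ combine into $P + \frac{p}{1-p}\Lambda$, and that the ratio with $u^-$ yields the $\frac{1}{1-p}$ prefactor. The identity in the second step is a routine Schur-complement fact. This argument assumes $a>0$; if $a=0$, the loss is identically zero and (\ref{eq:7}) is degenerate.
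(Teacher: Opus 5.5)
Your proposal is correct and follows essentially the same route as the paper. Both arguments first reduce $(H_{-i}^{(i)})^{-1}v_{-i}^{(i)}$ to $\frac{1}{1-p}\bigl((C^{-1})_{-i}\bigr)^{-1}(R^TR_{*i})_{-i}$ and then identify this with $-\frac{1}{1-p}C_{*i,-i}/C_{ii}$. The paper obtains that last identity by permuting index $i$ to the last position and applying the symmetric block-inverse formula, whereas you read it off directly from the $i$-th column of $C^{-1}C = I$. This is the same Schur-complement fact in a slightly more elementary form, and you also explicitly check that (\ref{eq:m2}) has zero diagonal, which the paper leaves implicit.
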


It is important to note that varying the diagonal of $W^*$ can lead to different performance on test data, and (\ref{eq:7}) does not provide theoretical guidance on which choice of diagonal elements gives the best performance. However, empirical results suggest that a $W^*$ with non-zero diagonal elements can outperform the zero-diagonal solution in certain cases \citep{moon2023relaxing}.

\subsection{Adding \texorpdfstring{$L_2$}{L} Regularizer or Zero-diagonal Constraint}

In LAE-based recommender systems, $L_2$ regularizer and zero diagonal constraint are commonly applied to the objective function, as they are  established techniques for improving test performance. This section discusses the closed-form solution of the optimization problem (\ref{eq:4}) when the $L_2$ regularizer or the zero-diagonal constraint is applied.

\textbf{Adding $L_2$ Regularizer}: Given $\lambda > 0$, (\ref{eq:4}) with $L_2$ regularizer is expressed as
\begin{equation}
    W^* = \text{argmin}_W\, l_{\mathcal{B}}(W) + \lambda\|W\|_F^2\label{eq:8}
\end{equation}

\textbf{Adding Zero-diagonal Constraint}: (\ref{eq:4}) with zero-diagonal constraint is expressed as
\begin{equation}
    W^* = \text{argmin}_W\, l_{\mathcal{B}}(W) \;\;\text{ s.t. } \text{diag}(W) = 0 \label{eq:9}
\end{equation}

In these cases, the solution (\ref{eq:4.5}) is modified accordingly, as presented below.

\begin{proposition}
    \label{proposition1}
    (a) The solution of (\ref{eq:8}) is
    \begin{equation}
        W_{*i}^* = \left(H^{(i)} + \lambda I\right)^{-1}v^{(i)} \quad \text{ for } i = 1, 2, ..., n \label{eq:10}
    \end{equation}
    (b) The solution of (\ref{eq:9}) is 
    \begin{equation}
        W_{*i}^* = {H^{(i)}}^{-1}v^{(i)} - \frac{({H^{(i)}}^{-1}v^{(i)})_{i}}{({H^{(i)}}^{-1}l^{(i)})_{i}}{H^{(i)}}^{-1}l^{(i)} \quad \text{ for } i = 1, 2, ..., n  \label{eq:11}
    \end{equation}
    where $l^{(i)}$ is an $n$-dimensional vector with $l^{(i)}_i = 1$ and $l^{(i)}_j = 0$ for all $j \ne i$.
\end{proposition}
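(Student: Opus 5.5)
The plan is to exploit the decoupling established in (\ref{eq:5.5}): both the regularizer and the constraint split column-wise. This reduces each part to a single quadratic program in $W_{*i}$ with Hessian built from $H^{(i)}$. Throughout I work under the standing assumptions of Theorem \ref{theorem1} ($b>0$, $0<p<1$, no zero column of $R$), so each $H^{(i)}$ is positive definite.

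For part (a), write $\lambda\|W\|_F^2=\sum_i \lambda W_{*i}^TW_{*i}$. Combining with (\ref{eq:5.5}), the objective of (\ref{eq:8}) becomes
\begin{equation*}
\sum_{i=1}^n W_{*i}^T(H^{(i)}+\lambda I)W_{*i}-2W_{*i}^Tv^{(i)}+\text{const}.
\end{equation*}
Since $H^{(i)}$ is positive semidefinite and $\lambda>0$, the matrix $H^{(i)}+\lambda I$ is positive definite. This holds even without Theorem \ref{theorem1}, so part (a) needs no assumption on $b$. Each summand is therefore strictly convex. Setting its gradient to zero gives (\ref{eq:10}), exactly as in the derivation of (\ref{eq:3}).

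For part (b), the constraint $\text{diag}(W)=0$ also decouples: it is equivalent to $W_{ii}=(l^{(i)})^TW_{*i}=0$ for every $i$. So for each $i$ I minimize the strictly convex quadratic $W_{*i}^TH^{(i)}W_{*i}-2W_{*i}^Tv^{(i)}$ subject to the single linear constraint $(l^{(i)})^TW_{*i}=0$. I would use a Lagrange multiplier $\mu_i$. Stationarity gives $2H^{(i)}W_{*i}-2v^{(i)}+\mu_i l^{(i)}=0$, hence
\begin{equation*}
W_{*i}={H^{(i)}}^{-1}v^{(i)}-\tfrac{\mu_i}{2}{H^{(i)}}^{-1}l^{(i)}.
\end{equation*}
Imposing the constraint $(W_{*i})_i=0$ determines $\mu_i/2=({H^{(i)}}^{-1}v^{(i)})_i/({H^{(i)}}^{-1}l^{(i)})_i$, which yields (\ref{eq:11}). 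The KKT conditions are sufficient here, because the problem is convex with affine constraints. The solution is unique because $H^{(i)}\succ 0$.

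The only delicate point is that the denominator must be nonzero. It equals $({H^{(i)}}^{-1}l^{(i)})_i=(l^{(i)})^T{H^{(i)}}^{-1}l^{(i)}$, which is strictly positive because ${H^{(i)}}^{-1}$ is positive definite by Theorem \ref{theorem1}. I would state this explicitly, noting that it is where $b>0$ and the nonzero-column assumption enter.
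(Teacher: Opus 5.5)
Your proposal is correct and follows the paper's proof essentially step for step. You decouple the objective column-wise, use a first-order condition for (a), and for (b) introduce a per-column Lagrange multiplier $\mu_i$ determined by $W_{ii}=0$. Your two refinements are slightly tighter than the paper. You justify global optimality by convexity with an affine constraint, where the paper invokes second-order sufficiency. You also explicitly check that the denominator $(l^{(i)})^T{H^{(i)}}^{-1}l^{(i)}$ is strictly positive, which the paper leaves unaddressed.
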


\section{An Efficient Algorithm for the Closed-Form Solution}
\label{sec:4}

Recall from Theorem \ref{theorem1} that the optimal $W^*$ for the $b > 0$ case of EDLAE can be computed by (\ref{eq:4.5}). However, a major challenge is its high computational complexity: since $H^{(i)}$ differs for each $i$, computing each inverse ${H^{(i)}}^{-1}$ costs $O(n^3)$ (suppose we use \textit{basic} matrix inverse algorithms, e.g., Cholesky decomposition \citep{krishnamoorthy2013matrix}), resulting in a total cost of $O(n^4)$ for all $i$, which is computationally impractical.

In this section, we show the existence of a practical algorithm that reduces the overall complexity of computing (\ref{eq:4.5}) from $O(n^4)$ to $O(n^3)$. We prove this by explicitly constructing one using Miller’s matrix inverse theorem:

\begin{theorem}
    \label{theorem3} (\citep{miller1981inverse}) Let $G$ and $G + Q$ be non-singular matrices. Suppose $Q$ is of rank $r$ and can be decomposed as $Q = E_1 + E_2 + ... + E_r$, where each $E_k$ is of rank $1$, and $P_{k+1} = G + E_1 + E_2 + ... +E_k$ is non-singular for $k = 1, 2, ..., r$. Let $P_1 = G$, then
    \begin{equation*}
        P_{k+1}^{-1} = P_k^{-1} - \frac{1}{1 + \textnormal{tr}\left(P_k^{-1}E_k\right)}P_k^{-1}E_kP_k^{-1}
    \end{equation*}
\end{theorem}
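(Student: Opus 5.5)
Theorem \ref{theorem3} is the iterated Sherman--Morrison formula, so the plan is induction on $k$. Each step applies the rank-one update identity to $P_{k+1} = P_k + E_k$, using the hypothesis that every $P_k$ is non-singular.

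\textbf{Base step (rank-one update).} Since $E_k$ has rank $1$, write $E_k = x_k y_k^T$ for nonzero vectors $x_k, y_k$. Then
\[
\textnormal{tr}(P_k^{-1}E_k) = \textnormal{tr}(P_k^{-1}x_k y_k^T) = y_k^T P_k^{-1} x_k
\]
by cyclicity of the trace. Set
\[
M = P_k^{-1} - \frac{1}{1 + y_k^TP_k^{-1}x_k}\,P_k^{-1}x_k y_k^TP_k^{-1}.
\]
Multiplying out gives
\[
(P_k + x_ky_k^T)M = I + x_ky_k^TP_k^{-1} - \frac{x_k y_k^TP_k^{-1} + x_k (y_k^TP_k^{-1}x_k) y_k^TP_k^{-1}}{1 + y_k^TP_k^{-1}x_k} = I,
\]
because the numerator of the fraction factors as $(1 + y_k^TP_k^{-1}x_k)\,x_k y_k^TP_k^{-1}$.

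\textbf{Denominator is nonzero.} This is the only genuinely delicate point: the formula above requires $1 + y_k^TP_k^{-1}x_k \neq 0$. I would obtain it from the matrix determinant lemma,
\[
\det(P_k + x_ky_k^T) = \det(P_k)\,(1 + y_k^TP_k^{-1}x_k).
\]
Both $P_k$ and $P_{k+1}$ are non-singular by hypothesis, so the left side and $\det(P_k)$ are nonzero, which forces $1 + y_k^TP_k^{-1}x_k \neq 0$. The determinant lemma itself follows from the block factorization of
\[
\begin{pmatrix} P_k & -x_k \\ y_k^T & 1 \end{pmatrix},
\]
computing its determinant via the Schur complement in each of the two possible orders.

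\textbf{Conclusion.} Given the denominator bound, the rank-one step shows $M = P_{k+1}^{-1}$, which is exactly the stated recursion. Chaining this for $k = 1, \dots, r$, starting from $P_1 = G$, yields the inverse of $P_{r+1} = G + Q$.
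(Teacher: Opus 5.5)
Your proof is correct and complete. The paper does not prove Theorem~\ref{theorem3} itself. It quotes the result from \citep{miller1981inverse} and uses it as a black box in Section~\ref{sec:4}, so the only point of comparison is the cited source. That source derives it the same way you do: a rank-one inverse-update lemma applied successively to each pair $P_k$, $P_{k+1}$.

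Three minor remarks:
\begin{itemize}
\item Nothing here is genuinely inductive. Nonsingularity of every $P_k$ is assumed outright, so each step is an independent application of the rank-one lemma.
\item The matrix determinant lemma can be replaced by a one-line kernel argument. Since $P_{k+1}P_k^{-1}x_k = (1 + y_k^TP_k^{-1}x_k)\,x_k$, a vanishing denominator would put the nonzero vector $P_k^{-1}x_k$ in the kernel of $P_{k+1}$, contradicting its nonsingularity.
\item Your outer-product form $E_k = x_ky_k^T$ is exactly what produces the paper's simplified updates (\ref{eq:15})--(\ref{eq:16}), taking $E_1^{(i)} = e_1^{(i)}{l^{(i)}}^T$ and $E_2^{(i)} = l^{(i)}{e_2^{(i)}}^T$.
\end{itemize}
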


\vspace{-0.5em}

In Lemma \ref{lemma1}, we define $H^{(i)} = G^{(i)} \odot R^TR$, which can be decomposed as
\begin{equation*}
    H^{(i)} = G_0 \odot R^TR + G_1^{(i)} \odot R^TR + G_2^{(i)} \odot R^TR
\end{equation*}
where $G_0$ is a matrix with diagonal elements equal to $(1 - p)pa^2 + (1 - p)^2b^2$ and off-diagonal elements equal to $(1 - p)^2pa^2 + (1 - p)^3b^2$; $G_1^{(i)}$ is a matrix with $(G_1^{(i)})_{ji} = - (1 - p)^2p(a^2 - b^2)$ for $j \ne i$, $(G_1^{(i)})_{ii} = - (1 - p)p(a^2 - b^2)$, and all other elements zero; $G_2^{(i)}$ is a matrix with $(G_2^{(i)})_{ij} = - (1 - p)^2p(a^2 - b^2)$ for $j \ne i$ and all other elements zero.

Denote $H_0 = G_0 \odot R^TR$, $E_1^{(i)} = G_1^{(i)} \odot R^TR$ and $E_2^{(i)} = G_2^{(i)} \odot R^TR$, then $H^{(i)} = H_0 + E_1^{(i)} + E_2^{(i)}$. Note that $H_0$ is positive definite and independent of $i$, $E_1^{(i)}$ is of rank $1$ with only the $i$-th column being nonzero, and $E_2^{(i)}$ is of rank $1$ with the $i$-th row (excluding $(E_2^{(i)})_{ii}$) being nonzero. 

Applying Theorem \ref{theorem3}, ${H^{(i)}}^{-1}$ can be computed with the following two steps:
\begin{align}
    {H^{(i)}_{+}}^{-1} = (H_0 + E_1^{(i)})^{-1} &= H_0^{-1} - \frac{1}{1 + \text{tr}({H_0^{-1}}E_1^{(i)})}H_0^{-1}E_1^{(i)}H_0^{-1} \label{eq:13} \\
    {H^{(i)}}^{-1} = (H_0 + E_1^{(i)} + E_2^{(i)})^{-1} &= H_+^{-1} - \frac{1}{1 + \text{tr}({H_+^{-1}}E_2^{(i)})}H_+^{-1}E_2^{(i)}H_+^{-1} \label{eq:14}
\end{align}

\vspace{-0.5em}

Let $e_1^{(i)}$ be the $i$-th column of $E_1^{(i)}$, ${e_2^{(i)}}^T$ be the $i$-th row of $E_2^{(i)}$, then (\ref{eq:13}) and (\ref{eq:14}) can be simplified as
\begin{align}
    {H^{(i)}_{+}}^{-1} &= H_0^{-1} - \frac{1}{1 + (H_0^{-1})_{i*}e_1^{(i)}}(H_0^{-1}e_1^{(i)}) (H_0^{-1})_{i*} \label{eq:15} \\
    {H^{(i)}}^{-1} &= {H^{(i)}_{+}}^{-1} - \frac{1}{1 + {e_2^{(i)}}^T({H^{(i)}_{+}}^{-1})_{*i}}({H^{(i)}_{+}}^{-1})_{*i} ({e_2^{(i)}}^T{H^{(i)}_{+}}^{-1}) \label{eq:16}
\end{align}

\vspace{-0.5em}

Observe that, given $H_0^{-1}$, the computation of each ${H^{(i)}}^{-1}$ using (\ref{eq:15}) and (\ref{eq:16}) requires only $O(n^2)$ operations, resulting in a total cost of $O(n^3)$ for all $i$. This significantly reduces the original $O(n^4)$ complexity of computing (\ref{eq:4.5}). 

Moreover, the computation can be further simplified by directly computing ${H^{(i)}}^{-1}v^{(i)}$ without explicitly forming ${H^{(i)}}^{-1}$. By (\ref{eq:15}) and (\ref{eq:16}),
\begin{align}
    {H^{(i)}_{+}}^{-1}v^{(i)} &= H_0^{-1}v^{(i)} - \frac{1}{1 + (H_0^{-1})_{i*}e_1^{(i)}}(H_0^{-1}e_1^{(i)})\left[(H_0^{-1})_{i*}v^{(i)}\right] \label{eq:16.1} \\
    {H^{(i)}}^{-1}v^{(i)} &= {H^{(i)}_{+}}^{-1}v^{(i)} - \frac{1}{1 + {e_2^{(i)}}^T({H^{(i)}_{+}}^{-1})_{*i}}({H^{(i)}_{+}}^{-1})_{*i} \left[({e_2^{(i)}}^T{H^{(i)}_{+}}^{-1}) v^{(i)}\right]
    \label{eq:16.2}
\end{align}

in which the scalars $(H_0^{-1})_{i*}v^{(i)}$ and $({e_2^{(i)}}^T{H^{(i)}_{+}}^{-1}) v^{(i)}$ can be computed first. In (\ref{eq:16.2}), the $({H^{(i)}_{+}}^{-1})_{*i}$ term can be computed by (\ref{eq:15}),
\begin{align*}
    ({H^{(i)}_{+}}^{-1})_{*i} &= (H_0^{-1})_{*i} - \frac{(H_0^{-1})_{ii}}{1 + (H_0^{-1})_{i*}e_1^{(i)}}(H_0^{-1}e_1^{(i)})
\end{align*}

\vspace{-1em}

Denote $s = {H^{(i)}_{+}}^{-1}v^{(i)}$, $t = ({H^{(i)}_{+}}^{-1})_{*i}$. Let $U$ be an $n \times n$ matrix with diagonal elements equal to $(1 - p)b^2$ and off-diagonal elements equal to $(1 - p)pa^2 + (1 - p)^2b^2$, let $G_1$ be an $n \times n$ matrix with diagonal elements $-(1 - p)p(a^2 - b^2)$ and off-diagonal elements $-(1 - p)^2p(a^2 - b^2)$, and let $G_2$ be an $n \times n$ matrix with zeros on the diagonal and off-diagonal elements $-(1 - p)^2p(a^2 - b^2)$. Then we can summarize our computation as follows.

\textbf{Fast Algorithm for Computing (\ref{eq:4.5})}: First, precompute these matrices
\begin{align*}
    &R^TR,\;\; H_0^{-1} = \left(G_0 \odot R^TR\right)^{-1},\;\;  [H_0^{-1}v^{(1)}, H_0^{-1}v^{(2)}, ..., H_0^{-1}v^{(n)}] = H_0^{-1}(U \odot R^TR), \\
    &[H_0^{-1}e_1^{(1)}, H_0^{-1}e_1^{(2)}, ..., H_0^{-1}e_1^{(n)}] = H_0^{-1}(G_1 \odot R^TR),\;\; [e_2^{(1)}, e_2^{(2)}, ..., e_2^{(n)}] = G_2 \odot R^TR 
\end{align*}
Then for $i = 1, 2, ..., n$, compute each $W_{*i}^* = {H^{(i)}}^{-1}v^{(i)}$ as follows:
\begin{align*}
    r &= H_0^{-1}v^{(i)}, \quad w = H_0^{-1}e_1^{(i)} \\
    s &= r - \frac{1}{1 + w_i}r_iw, \quad t = (H_0^{-1})_{*i} - \frac{(H_0^{-1})_{ii}}{1 + w_i}w \\
    {H^{(i)}}^{-1}v^{(i)} &= s - \frac{1}{1 + {e_2^{(i)}}^Tt}({e_2^{(i)}}^Ts)t
\end{align*}

We now analyze the computational complexity of the above algorithm. Here we assume that matrix multiplication and inversion are implemented using \textit{basic} linear algebra algorithms. In the precomputing stage, $R^TR$ costs $O(mn^2)$, $H_0^{-1}$ costs $O(n^3)$, $H_0^{-1}(U \odot R^TR)$ and $H_0^{-1}(G_1 \odot R^TR)$ cost $O(n^3)$, and $G_2 \odot R^TR$ costs $O(n^2)$. In the computing stage, each $W_{*i}^*$ is obtained via only vector-vector multiplications, with a complexity of $O(n)$, resulting in an overall complexity of $O(n^2)$ for the entire $W^*$. Therefore, the total complexity of this algorithm is $O\left(\max(m + n)n^2\right)$. 

This complexity is the same as the closed-form solutions of EASE \citep{steck2019embarrassingly} and EDLAE \citep{steck2020autoencoders}, and the main bottleneck lies in the $O(n^3)$ complexity for matrix inverse. If using \textit{more advanced} algorithms for matrix multiplication and inversion, the complexity of our Fast Algorithm (as well as EASE and EDLAE) can be in further reduced from $O(n^3)$ to $O(n^{2.376})$. We elaborate on this in Appendix \ref{ap:b}.

% \textbf{Complexity for Sparse Matrix}: One computational bottleneck in the above algorithm is the computation of $R^TR$. If $R$ is sparse and all its entries are integers, then the algorithm's complexity can be further reduced. Suppose $R$ contains $k$ non-zeros. The multiplication $R^TR$ takes two matrices $R^T$ and $R$ as input, which together contain $2k$ non-zeros; the output is an $n \times n$ matrix containing at most $n^2$ nonzeros. By \citep{abboud2024time}, the complexity of computing $R^TR$ is $O((2k + n^2)^{1.346})$. Hence, the total complexity reduces to $O((2k + n^2)^{1.346} + n^3)$, where the $n^3$ term comes from inverting $H_0$.

\textbf{Adapting the Algorithm to the $L_2$ regularizer and Zero-diagonal Constraint Cases}: In (\ref{eq:10}), note that $H^{(i)} + \lambda I = (H_0 + \lambda I) + E_1^{(i)} + E_2^{(i)}$, where $H_0 + \lambda I$ is independent of $i$. This means that we can compute (\ref{eq:10}) using the above algorithm by replacing $H_0$ with $H_0 + \lambda I$. To compute (\ref{eq:11}), we use the algorithm to process the two components ${H^{(i)}}^{-1}v^{(i)}$ and ${H^{(i)}}^{-1}l^{(i)}$ respectively: first compute ${H^{(i)}}^{-1}v^{(i)}$, then replace $v^{(i)}$ with $l^{(i)}$ and compute ${H^{(i)}}^{-1}l^{(i)}$.

\vspace{-0.3em}

\section{Experiements}
\label{sec:5}

\vspace{-0.5em}

This section provides experimental results comparing DEQL with state-of-the-art collaborative filtering models, including linear models and deep learning based models. 

%Additional experiments on time and space costs can be found in Appendix \ref{ap:d}.

\subsection{Experimental Set-Up}

\textbf{Datasets}: We utilize two group of public datasets. Group 1 includes Games, Beauty, Gowalla-1, ML-20M, Netflix, and MSD \citep{steck2019embarrassingly, ni2019justifying, seol2024proxy}. This group adopts \textit{strong generalization} setting, where the dataset is split into training and test sets by users (i.e., the users in the two sets do not overlap). Group 2 includes Amazonbook, Yelp2018 and Gowalla \cite{he2020lightgcn}. This group adopts \textit{weak generalization} setting, where the dataset is split into training and test sets by interactions (i.e., a user may appear in both sets). Among these datasets, Gowalla-1 is preprocessed from the raw Gowalla dataset by us, whereas all other datasets follow the default training/test splits used in prior work.

Group 1 is used to compare DEQL with other LAE-based models, while Group 2 is used to compare DEQL with both deep learning-based and LAE-based models. We adopt this setting to align with common practice in the literature, where LAE models are more commonly evaluated under strong generalization, while deep learning models are typically evaluated under weak generalization. Details of these datasets are provided in Table \ref{tb:0}.

\begin{table}[htbp]
    \centering
    \vspace{-0.7em}
    \caption{Details of Datasets}
    \vspace{0.5em}
    \label{tb:0}
    \resizebox{\textwidth}{!}{
    \begin{tabular}{lcccccccccc}
    \toprule
    \multirow{2}{*}{Datasets} & \multicolumn{6}{c}{Group 1} & & \multicolumn{3}{c}{Group 2} \\
    \cmidrule{2-7}
    \cmidrule{9-11}
    & Games & Beauty & Gowalla-1 & ML20M & Netflix & MSD & & Amazon-Books & Yelp2018 & Gowalla \\
    \midrule
         \# items      & 896     & 4,394    & 13,681   & 20,108   & 17,769   & 41,140 & & 91,599  & 38,048  & 40,981 \\
\# users      & 1,006   & 17,971   & 29,243   & 136,677  & 463,435  & 571,353 & & 52,643  & 31,668  & 29,858 \\
\# interactions     & 15,276  & 75,472   & 677,956  & 9,990,682 &  56,880,037 & 33,633,450 & & 2,984,108  & 1,561,406  & 1,027,370 \\
 density     & 1.69\%  & 0.10\%   & 0.17\%  & 0.36\% & 0.69\% & 0.14\% & & 0.06\%  & 0.13\%  & 0.08\%\\
    item-user ratio & 0.89 & 0.24 & 0.47 & 0.15 & 0.04 & 0.07 & & 1.74 & 1.20 & 1.37 \\
    \bottomrule
    \end{tabular}
    }
    
    \vspace{-0.1em}
    \begin{flushleft}
    \hspace{0.2em} {\scriptsize * density $=$ \# interactions / (\# users $\times$ \# items),$\;$ item-user ratio = \# items / \# users}
    \end{flushleft}
\end{table}

\vspace{-0.5em}

\noindent\textbf{Baseline models and Evaluation Metrics}: We compare DEQL with the following state-of-the-art LAE-based models: EASE \citep{steck2019embarrassingly}, DLAE \citep{steck2020autoencoders}, EDLAE \citep{steck2020autoencoders}, ELSA \citep{vanvcura2022scalable}, as well as the following recent deep learning-based models: PinSage \citep{ying2018graph}, LightGCN \citep{he2020lightgcn}, DGCF \citep{wang2020disentangled}, SimpleX \citep{mao2021simplex}, SGL-ED \citep{wu2021self} and SSM \citep{wu2024effectiveness}.
We evaluate model performance using widely adopted ranking metrics: Recall@20 (R@20) and NDCG@20 (N@20). 

\noindent\textbf{Different versions of DEQL Models}: we test different versions of DEQL, distinguished as follows:
\begin{itemize}[leftmargin=0.2in]
    \vspace{-0.5em}
    \item DEQL(plain): the plain DEQL with $b > 0$, computed by (\ref{eq:4.5}).
    \item DEQL(L2): DEQL with $b > 0$ and $L_2$ regularization, computed by combining (\ref{eq:4.5}) and (\ref{eq:10}).
    \item DEQL(L2+zero-diag): DEQL with $b > 0$, $L_2$ regularization, and a zero-diagonal, computed by combining (\ref{eq:4.5}), (\ref{eq:10}) and (\ref{eq:11}).
    \vspace{-0.5em}
\end{itemize}

Note that all baseline LAE models (EASE, EDLAE, DLAE and ELSA) in our experiments are equipped with $L_2$ regularization, consistent with their original papers \footnote{For EDLAE, the author reports using an $L_2$ regularizer in their experimental set-up \citep{steck2020autoencoders}.}. Therefore, DEQL(L2) and DEQL(L2+zero-diag) provide a fair comparison with these baselines, while DEQL(plain) does not.

Moreover, the EDLAE baseline corresponds to DEQL with $b = 0$ and a zero-diagonal (see Theorem \ref{theorem2}). Hence, in our experiments, DEQL refers only to the case $b > 0$.

%When comparing with modern deep learning based models, since most of these models rely on user and item embeddings to make prediction and require users appear in training dataset, we further evaluate our methods on 3 additional widely used datasets: Amazonbook, Yelp2018 and Gowalla \cite{he2020lightgcn} under \textit{weak generalization} setting. For better generalization, we preprocess the data following \citep{steck2020autoencoders, he2020lightgcn}. 

\noindent\textbf{Hyperparameter Tuning}: We perform a grid search to tune the DEQL hyperparameters $a, b, p$. Note that for the DEQL objective (\ref{eq:4}) (as well as the EDLAE objective (\ref{eq:m1})), if both $a$ and $b$ are scaled by a constant $\alpha > 0$, the objective is scaled by $\alpha$, while solution $W^*$ remains unchanged. That is, $W^*$ is scalar invariant and depends solely on the ratio $b/a$. Hence, we fix $a=1$ and search $b$ over the range [0.1,0.25,…,2.0]. The $L_2$ regularization coefficient $\lambda$ is searched over [10.0,20.0,...50,100.0,300.0,500.0], and the dropout rate $p$ is varied across [0.1,0.2,…,0.5,0.8]. %Remember that, when choosing $b = 0$, DEQL reduces to EDLAE, so we only evaluate DEQL in the region $b > 0$.

\noindent\textbf{Hardware and Reproducibility}: All experiments are conducted on a Linux server equipped with 500 GB of memory, four NVIDIA 3090 GPUs, and a 96-core Intel(R) Xeon(R) Platinum 8268 CPU @ 2.90GHz.
Our code is available at \url{https://github.com/coderaBruce/DEQL}.

%Since in the objective function, all quadratic entries are equipped with either $a$ or $b$ and the solution to it is scalar invariant (e.g., in \ref{eq:m1}, although $(a, b)=(1, 0.1)$ and $(a,b)=(10,1)$ gives different Loss value, the obtained $W^{*}$ will always be the same), only the ratio $b/a$ affects the closed-form solution.

%\vspace{-0.5em}

\subsection{Model Performance Evaluation}
\label{sec:5.2}

%\vspace{-0.5em}

%In this section, we compare the performance of DEQL with LAE-based and deep learning-based models under R@20 and N@20. 

Table \ref{tb:1} compares DEQL with LAE-based models under the strong generalization setting. The results show that DEQL (plain) does not outperform the baselines, likely due to the absence of $L_2$ regularization, which prevents a fair comparison. However, DEQL(L2) and DEQL(L2+zero-diag) outperform the baselines by a small margin. These results indicate that:

1. The original EDLAE solution \cite{steck2020autoencoders}, obtained by setting $b = 0$ with a zero diagonal, is not optimal. Solutions in the $b > 0$ region, obtained via DEQL, can achieve higher test performance. 

2. $L_2$ regularization is essential for improving performance, as DEQL(L2) and DEQL(L2+zero-diag) significantly outperform DEQL (plain). 

3. The zero-diagonal constraint does not necessarily lead to better performance, since DEQL(L2) frequently outperforms DEQL(L2+zero-diag). In fact, the optimal DEQL(L2) solutions exhibit relatively small diagonal entries, as shown in Appendix \ref{sec:f.2}. This evidence supports the claim in \cite{moon2023relaxing} that relaxing the zero-diagonal constraint in LAEs can improve performance.

Moreover, despite the relatively small performance gains of DEQL(L2) and DEQL(L2+zero-diag) in Table \ref{tb:1}, statistical significance tests (see Appendix \ref{ap:d1}) confirm that these improvements are stable and unlikely to result from statistical randomness.

Table \ref{tab:main_weak} compares DEQL with deep learning-based and LAE-based models under the weak generalization setting. The result show that DEQL(L2) achieve the best performance on Amazonbook, outperforming competitors by up to 27\% and 34\% in R@20 and N@20, respectively, and surpasses most models on Yelp2018 and Gowalla.

\begin{table}[htbp] 
\scriptsize
\centering

\vspace{-0.8em}

\caption{Performance comparison between DEQL and other LAE-based models under the strong generalization setting. The best results are highlighted in bold.}
\vspace{0.3em}

\resizebox{\textwidth}{!}{%
\begin{tabular}{c cc cc cc cc cc cc}
\toprule
\multirow{2}{*}{Model} & \multicolumn{2}{c}{Games} & \multicolumn{2}{c}{Beauty} & \multicolumn{2}{c}{Gowalla-1} & \multicolumn{2}{c}{ML20M} & \multicolumn{2}{c}{Netflix} & \multicolumn{2}{c}{MSD} \\
\cmidrule(lr){2-13}
 & R@20 & N@20 & R@20 & N@20 & R@20 & N@20 & R@20 & N@20 & R@20 & N@20 & R@20 & N@20 \\
\midrule
DLAE          & 0.2771 & 0.1664 & 0.1329 & 0.0886 & 0.2143 & 0.1916 & 0.3924 & 0.3409 & 0.3620 & 0.3395 & 0.3290 & 0.3210 \\
EASE          & 0.2733 & 0.1640 & 0.1323 & 0.0875 & 0.2230 & 0.1988 & 0.3905 & 0.3390 & 0.3618 & 0.3388 & 0.3332 & 0.3261 \\
EDLAE         & 0.2851 & 0.1681 & 0.1324 & 0.0850 & 0.2268 & 0.2012 & 0.3925 & 0.3421 & 0.3656 & 0.3427 & 0.3336 & 0.3258 \\
ELSA	& 0.2734 & 0.1658 & 0.1263 & 0.0763 & 0.2255 & 0.1960 & 0.3919 & 0.3386 & 0.3625 & 0.3372 & 0.3256 & 0.3144
 \\
DEQL(plain)         & 0.2524 & 0.1565 & 0.1093 & 0.0670 & 0.2149 & 0.1909 & 0.3844 & 0.3347 & 0.3606 & 0.3382 & 0.3329 & 0.3256 \\
DEQL(L2+zero-diag)  & 0.2872 & 0.1704 & 0.1388 & \textbf{0.0898} & 0.2278 & 0.2027 & 0.3934 & \textbf{0.3429} & 0.3656 & 0.3423 & \textbf{0.3344} & \textbf{0.3268} \\
DEQL(L2)         & \textbf{0.2998} & \textbf{0.1842} & \textbf{0.1391} & 0.0881 & \textbf{0.2288} & \textbf{0.2033} & \textbf{0.3934} & 0.3426 & \textbf{0.3658} & \textbf{0.3428} & 0.3340 & 0.3265 \\
\bottomrule
\end{tabular}
}

\label{tb:1}
\end{table}

\begin{table}[htbp]
\centering

\vspace{-0.8em}

\caption{Performance comparison between DEQL and deep learning–based and LAE-based models under the weak generalization setting.}
\vspace{0.3em}
\label{tab:main_weak}
\tiny
\setlength{\tabcolsep}{10pt} % 调整列间距，默认约 6pt，可改成 8pt 或 10pt 让表格更宽
\begin{tabular}{lcccccc}
\toprule
\multirow{2}{*}{Model} & \multicolumn{2}{c}{Amazon-Books} & \multicolumn{2}{c}{Yelp2018} & \multicolumn{2}{c}{Gowalla} \\
\cmidrule(lr){2-3} \cmidrule(lr){4-5} \cmidrule(lr){6-7}
 & R@20 & N@20 & R@20 & N@20 & R@20 & N@20 \\
\midrule
\multicolumn{7}{l}{\textbf{Deep learning based models}} \\
PinSage       & 0.0282 & 0.0219 & 0.0471 & 0.0393 & 0.1380 & 0.1196 \\
LightGCN      & 0.0411 & 0.0315 & 0.0649 & 0.0530 & 0.1830 & 0.1554 \\
DGCF          & 0.0422 & 0.0324 & 0.0654 & 0.0534 & 0.1842 & 0.1561 \\
SGL-ED        & 0.0478 & 0.0379 & 0.0675 & 0.0555 & --     & --     \\
SimpleX       & 0.0583 & 0.0468 & 0.0701 & 0.0575 & 0.1872 & 0.1557 \\
SSM (MF)      & 0.0473 & 0.0367 & 0.0509 & 0.0404 & 0.1231 & 0.0878 \\
SSM (GNN)     & 0.0590 & 0.0459 & \textbf{0.0737} & \textbf{0.0609} & \textbf{0.1869} & \textbf{0.1571} \\
\midrule
\multicolumn{7}{l}{\textbf{LAE-based models}} \\
DLAE          & \textbf{0.0751} & 0.0610 & 0.0678 & 0.0570 & 0.1839 & 0.1533 \\
EASE          & 0.0710 & 0.0566 & 0.0657 & 0.0552 & 0.1765 & 0.1467 \\
EDLAE         & 0.0711 & 0.0566 & 0.0673 & 0.0565 & 0.1844 & 0.1539 \\
ELSA	& 0.0719	& 0.0594	& 0.0629	& 0.0541	& 0.1755	& 0.1490 \\
DEQL(plain)          & 0.0695 & 0.0537 & 0.0647 & 0.0543 & 0.1749 & 0.1453 \\
DEQL(L2+zero-diag) & 0.0711 & 0.0567 & 0.0672 & 0.0565 & 0.1844 & 0.1539 \\
DEQL(L2)      & \textbf{0.0751} & \textbf{0.0613} & 0.0685 & 0.0576 & 0.1845 & 0.1540 \\
\bottomrule
\end{tabular}
\label{tb:2}
\end{table}

\subsection{The Impact of \texorpdfstring{$b$}{B} on Model Performance}
\label{sec:b}

%In EDLAE, $a$ represents the emphasis on dropout entries, while $b$ represents the emphasis on entries that are remained. The original EDLAE suggests that placing more emphasis on dropped-out entries than on non-dropped entries can improve performance, and therefore restricts $a \ge b$ to ensure the loss remains meaningful. However, the original EDLAE only provides a closed-form solution for the case $b = 0$, which does not explain how varying $b$ affects the performance. To address this, we leverage our closed-form solution for $b > 0$ from DEQL and conduct a sensitivity analysis to investigate that how different $b$ impact test performance.

\begin{figure}[htbp]
    \centering
    \vspace{-1.3em}
    \includegraphics[width=1.0\linewidth]{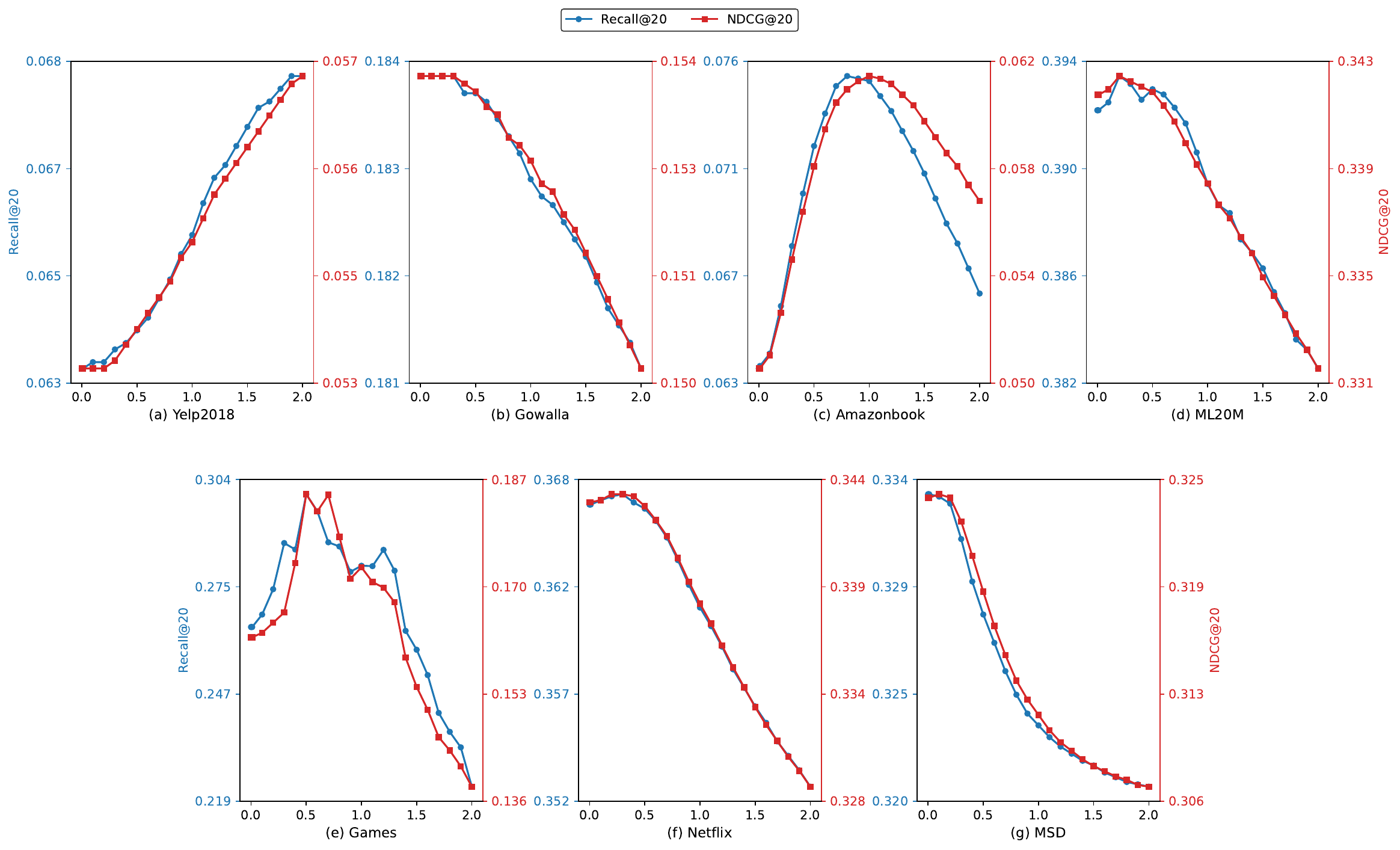}
    \vspace{-1.2em}
    \caption{Sensitivity on $b/a$ ratio across different datasets}
    \vspace{-1.3em}
    \label{fig:b_sensitivity}
\end{figure}

%Although Section \ref{sec:5.2} shows that DEQL with $b > 0$ can outperform EDLAE (DEQL with $b = 0$), it does not provide guidance for hyperparameter tuning, as it does not identify the range of $b$ that lead to strong performance. Here, we perform a sensitivity analysis on $b$ to provide such guidance.

In this section, we conduct a sensitivity analysis on $b$ to investigate how different values of $b$ affect model performance.

We evaluate the effect of different $b$ across seven benchmark datasets, as shown in Figure \ref{fig:b_sensitivity}. The first three datasets use weak generalization, while the last four datasets use strong generalization. For each dataset, we set $a = 1$ and vary $b$ from $0$ to $2.0$, and report their Recall@20 and NDCG@20. Note that for $b = 0$, the solution is obtained from (\ref{eq:m2}); and for $b > 0$, the solution is obtained from (\ref{eq:4.5}), whose existence is guaranteed by Theorem \ref{theorem1}.

On datasets \textit{ML-20M}, \textit{Games}, \textit{Netflix}, and \textit{MSD}, we observe a clear and consistent pattern: performance first improves when increasing $b$ from $0$, reaches its peak \emph{before} the $b/a$ ratio exceeds $1$, and then gradually decreases as $b$ becomes too large. This behavior directly demonstrates that the $b = 0$ choice in the original EDLAE does not necessarily yield the best performance, and that models obtained with $b > 0$ using DEQL can achieve superior results.

Interestingly, we observe a markedly different pattern on \textit{Yelp2018} and \textit{AmazonBook}: their optimal $b/a$ ratios approach $1$ or even \emph{exceed} $1$ (as in Yelp). In the original EDLAE formulation, $a \ge b$ promotes reconstruction of \emph{dropped} items through cross-item learning, with $b$ typically fixed to zero. The regime of $b > a$ has rarely been explored, as it instead trains the model to use the \emph{remained} items to better predict other remained ones within the same set. Although $a > b$ appears to be the intuitive choice, our results show that $b > a$ can yield superior performance on certain datasets, suggesting that emphasizing dropped entries is not always beneficial.

Several factors may jointly contribute to this issue. We hypothesize that a key factor is the large \textbf{item-user ratio} and the resulting unreliable cross-item correlations. When the number of items greatly exceeds the number of users, as in \textit{AmazonBook} and \textit{Yelp2018}, the interaction matrix becomes extremely sparse. In such settings, cross-item correlations are weak or noisy, since most items receive very few interactions, providing insufficient data to estimate reliable item-item relationships for collaborative filtering \citep{nikolakopoulos2021trust}. Under this regime, training with $a > b$ forces the model to rely more heavily on unreliable cross-item correlations, thereby degrading performance. In contrast, setting $b > a$ shifts the objective toward reconstructing the remained items themselves rather than the dropped ones, thereby stabilizing training through stronger self-association signals. This effect is reflected in the larger diagonal magnitudes of the learned weight matrices $W$, indicating increased reliance on identity-like mappings (Figure \ref{fig:diag_visualization}).

\vspace{-0.8em}

\section{Conclusion}

\vspace{-0.5em}

This paper aims to advance the EDLAE recommender system by extending its closed-form solution to a broader range of hyperparameter choices, and develop an efficient algorithm to compute these solutions. We first generalize the EDLAE objective function into DEQL, derive its closed-form solutions, and then apply them back to EDLAE. DEQL extends the original EDLAE solution for $b = 0$ to a wider range $b \ge 0$, enabling exploration of a larger solution space. To address the high computational complexity of solutions for $b > 0$, we develop an efficient algorithm based on Miller’s matrix inverse theorem, reducing the complexity from $O(n^4)$ to $O(n^3)$. 

Experimental results demonstrate that most solutions for $b > 0$ outperform the $b = 0$ baseline, showing that DEQL enables the discovery of models with better testing performance. Furthermore, on certain datasets, the optimal $b$ even lies in the regime $b > a$. To the best of our knowledge, this is the first empirical evidence showing that the commonly assumed EDLAE constraint $a \ge b$ is not universally optimal. Finally, we note that DEQL is a general loss function that may inspire the construction of other specialized objectives for LAE models.

\subsubsection*{Acknowledgments}
This research was partially supported by NSF grant IIS 2142675. We thank all reviewers for their insightful comments.

\bibliography{iclr2026_conference, recommendation}
\bibliographystyle{iclr2026_conference}

\newpage

\appendix

\section{Illustration of the DEQL framework}
\label{ap:0}

\begin{figure}[htbp]
\begin{center}
\begin{tikzpicture}
    \filldraw[fill=green!20!white, draw=black] (0.1, 0.1) circle (3.2cm);
    \filldraw[fill=white, draw=black] (-1, -1) circle (1cm);
    \draw[thick, orange] (-0.6, -0.6) circle (1.8cm);
    \draw[thick, cyan] (-0.2, -0.2) circle (2.6cm);
    
    \node[yshift = -0cm] at (-1, -1) {\small \makecell{EDLAE \\ $a > 0, b = 0$}};
    \node at (-0.3, 0.4) {\small \makecell{EDLAE \\ $a \ge b > 0$}};
    \node at (0.1, 1.6) {\small \makecell{Extended EDLAE \\ $b > a \ge 0$}};
    \node at (0.7, 2.7) {\small \makecell{DEQL}};

    \draw [arrow, black] (3.17, 1) -- (4.07, 1);
    \draw [arrow, cyan] (-1.5, 2.05) -- (-1.5, 3.8);
    \draw [arrow, orange] (-1.0, -2.38) -- (-1.0, -3.48);

    \node[block] at (6.53, 1) {\footnotesize \makecell[l]{DEQL, defined in (\ref{eq:2}), \\
    serves as a general loss by\\
    allowing arbitrary $\{\mathcal{D}^{(i)}\}_{i=1}^n$.\\ Its solution is given in (\ref{eq:3}), \\
     which costs $O(n^4)$ in the worst case.}};
    \node[block1] at (-0.0, -4.1) {\footnotesize \makecell[l]{The original EDLAE by \citep{steck2020autoencoders}.\\ Setting $a\ge b\ge 0$ to place greater emphasis on dropped entries.\\
    Only provides the solution for $b = 0$.}};
    \node[block] at (2.5, 4.5) {\footnotesize \makecell[l]{The loss of EDLAE and extended EDLAE
    is a special case of DEQL obtained by taking
    a specific $\{\mathcal{D}^{(i)}\}_{i=1}^n$.\\ Their solutions are given in (\ref{eq:4.5}) and (\ref{eq:5}) (derived from (\ref{eq:3})), and the complexity can be reduced to $O(n^3)$ \\
    by constructing a Fast Algorithm (Section \ref{sec:4}) based on
    Miller's matrix inverse theorem \citep{miller1981inverse}.}};
    
\end{tikzpicture}
\end{center}

\vspace{-0.8em}

\caption{Comparison of the closed-form solution sets of DEQL and EDLAE. The white region represents the set of original EDLAE solution, and the green region represents the remaining solutions covered by DEQL. The orange circle marks solutions derived from a original EDLAE loss, whereas the cyan circle marks solutions obtained from the \textit{extended} EDLAE loss (with hyperparameter choices $b > a$, which go beyond the original EDLAE constraints but still yield valid solutions).} \label{fig:2}
\end{figure}
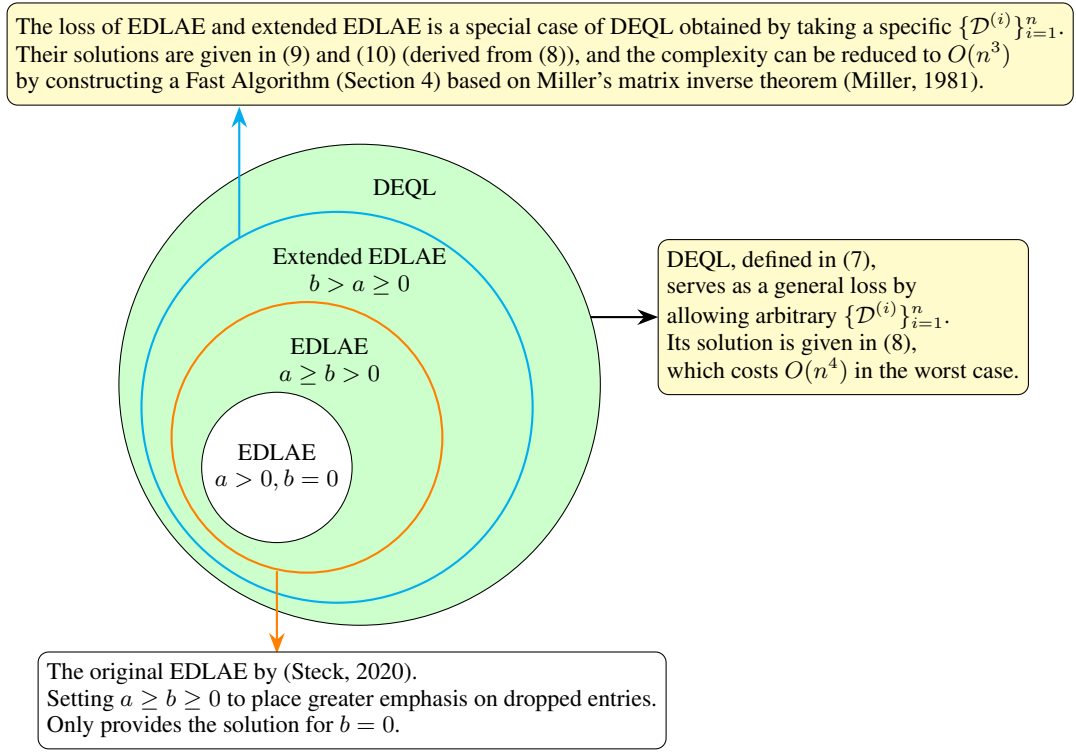

\section{Mathematical Proofs}
\label{ap:a}
\textit{Proof of Lemma \ref{lemma1}}: $H^{(i)}$ can be computed as follows. For any $k, l$, 
\begin{align}
    H^{(i)}_{kl} &= \mathbb{E}_{\Delta}[(\Delta \odot R)_{*k}^T{A^{(i)}}^2(\Delta \odot R)_{*l}] = \mathbb{E}_{\Delta}[\sum_{s=1}^m \Delta_{sk} R_{sk}{A^{(i)}_{ss}}^2\Delta_{sl} R_{sl}] \nonumber \\
    &= \sum_{s=1}^m \mathbb{E}_{\Delta}[\Delta_{sk} R_{sk}{A^{(i)}_{ss}}^2\Delta_{sl} R_{sl}] = \sum_{s=1}^m \mathbb{E}_{\Delta}[\Delta_{sk}\Delta_{sl}{A^{(i)}_{ss}}^2]R_{sk} R_{sl} \label{eq:a1}
\end{align}

Note that $A^{(i)}_{ss} = A_{si}$, which depends on $\Delta_{si}$. Since we assume each $\Delta_{ij}$ is an i.i.d. Bernoulli random variable, $\mathbb{E}_{\Delta}[\Delta_{sk}\Delta_{sl}A_{si}^2]$ is independent of $s$. Thus we can let a $z$ be a specific value of $s$ and rewrite (\ref{eq:a1}) as
\begin{equation*}
    H^{(i)}_{kl} = \mathbb{E}_{\Delta}[\Delta_{zk}\Delta_{zl}A_{zi}^2]\sum_{s=1}^m R_{sk} R_{sl} = \mathbb{E}_{\Delta}[\Delta_{zk}\Delta_{zl}A_{zi}^2] R_{*k}^TR_{*l}
\end{equation*}

Define $G^{(i)} \in \mathbb{R}^{n\times n}$ where $G_{kl}^{(i)} = \mathbb{E}_{\Delta}[\Delta_{zk}\Delta_{zl}A_{zi}^2]$, then $H^{(i)} = G^{(i)} \odot R^TR$. $G^{(i)}$ can be computed as follows: Given $i$, for any $k, l$,
\begin{equation*}
    \Delta_{zk}\Delta_{zl}A_{zi}^2 = 
    \begin{cases}
        a^2 &\text{ if }\; \Delta_{zk} = 1 \text{ and } \Delta_{zl} = 1 \text{ and } \Delta_{zi} = 0 \\
        b^2 &\text{ if }\; \Delta_{zk} = 1 \text{ and } \Delta_{zl} = 1 \text{ and } \Delta_{zi} = 1 \\
        0 & \text{ otherwise }
    \end{cases}
\end{equation*}

Since
\begin{equation*}
    P\left(\Delta_{zk} = 1 \text{ and } \Delta_{zl} = 1 \text{ and } \Delta_{zi} = 0\right) =
    \begin{cases}
        (1 - p)p &\text{ if }\; k = l \ne i \\
        (1 - p)^2p &\text{ if }\; i \ne k \ne l \ne i
    \end{cases}
\end{equation*}
\begin{equation*}
    P\left(\Delta_{zk} = 1 \text{ and } \Delta_{zl} = 1 \text{ and } \Delta_{zi} = 1\right) = 
    \begin{cases}
        1 - p &\text{ if }\; k = l = i \\
        (1 - p)^2 &\text{ if }\; k = l \ne i \text{ or } k \ne l = i \text{ or } l \ne k = i \\
        (1 - p)^3 &\text{ if }\; i \ne k \ne l \ne i
    \end{cases}
\end{equation*}
we have
\begin{equation*}
    G_{kl}^{(i)} = \mathbb{E}_{\Delta}[\Delta_{zk}\Delta_{zl}A_{zi}^2] =  
    \begin{cases}
        (1 - p)b^2 &\text{ if }\; k = l = i \\
        (1 - p)^2b^2 &\text{ if }\; k \ne l = i \text{ or } l \ne k = i \\
        (1 - p)pa^2 + (1 - p)^2b^2 &\text{ if }\;  k = l \ne i \\
        (1 - p)^2pa^2 + (1 - p)^3b^2 &\text{ if }\;  i \ne k \ne l \ne i
    \end{cases}
\end{equation*}

On the other hand, $v^{(i)}$ can be computed as follows. For any $k$,
\begin{align*}
    v_k^{(i)} &= \mathbb{E}_{\Delta}[(\Delta \odot R)_{*k}^T{A^{(i)}}^2R_{*i}] = \mathbb{E}_{\Delta}[\sum_{s=1}^m \Delta_{sk} R_{sk}{A^{(i)}_{ss}}^2R_{si}] = \sum_{s = 1}^m \mathbb{E}_{\Delta}[\Delta_{sk}A_{si}^2]R_{sk}R_{si} \\
    & = \mathbb{E}_{\Delta}[\Delta_{zk}A_{zi}^2] R_{*k}^TR_{*i}
\end{align*}
Define $u^{(i)} \in \mathbb{R}^n$ where $u_k^{(i)} = \mathbb{E}_{\Delta}[\Delta_{zk}A_{zi}^2]$, then we can write $v^{(i)} = u^{(i)} \odot R^TR_{*i}$. $u^{(i)}$ can be computed as follows: Given any $k$,
\begin{equation*}
    u_k^{(i)} = \mathbb{E}_{\Delta}[\Delta_{zk}A_{zi}^2] = \begin{cases}
        (1 - p)b^2 &\text{if } k = i \\
        (1 - p)pa^2 + (1 - p)^2b^2 &\text{if } k \ne i
    \end{cases}
\end{equation*}

$\QED$

\vspace{1.0em}

\textit{Proof of Theorem \ref{theorem1}}: Observe that $G^{(i)}$ can be decomposed as the sum of two matrices:
\begin{equation*}
    G^{(i)} = (1 - p)b^2M^{(i)} + (1 - p)pa^2N^{(i)}
\end{equation*}
where
\begin{equation*}
        M_{kl}^{(i)} = \begin{cases}
       1 &\text{if }\; k = l = i \\
        1 - p &\text{if }\; k \ne l = i \text{ or } l \ne k = i \text{ or }  k = l \ne i \\
        (1 - p)^2 &\text{if }\;  i \ne k \ne l \ne i
    \end{cases}
\end{equation*}
\begin{equation*}
        N_{kl}^{(i)} = \begin{cases}
       0 \quad\;&\text{if }\; k = l = i \text{ or }  k \ne l = i \text{ or } l \ne k = i \\
        1 \quad\;&\text{if }\;  k = l \ne i \\
        1 - p \quad\;&\text{if }\;  i \ne k \ne l \ne i
    \end{cases}
\end{equation*}
for $k, l \in \{1, 2, ..., n\}$.

We can show that for any $i$, $M^{(i)}$ is positive definite and $N^{(i)}$ is positive semi-definite: Let $x = [x_1, x_2, ..., x_n]^T \in \mathbb{R}^n$,
\begin{equation*}
    x^TM^{(i)}x = \left(x_i + (1 - p)\sum_{\substack{j=1 \\ j \ne i}}^n x_j\right)^2 + p(1 - p)\left(\sum_{\substack{j=1 \\ j \ne i}}^n x_j^2\right) > 0\quad \text{ for any } x \ne 0
\end{equation*}
\begin{equation*}
    x^TN^{(i)}x = (1 - p)\left(\sum_{\substack{j=1 \\ j \ne i}}^n x_j\right)^2 + p\left(\sum_{\substack{j=1 \\ j \ne i}}^n x_j^2\right) \ge 0\quad \text{ for any } x
\end{equation*}

Hence, $G^{(i)}$ is positive definite if $a \ge 0, b > 0$ and $0 < p < 1$. 

$R^TR$ is a positive semi-definite matrix. If no column of $R$ is a zero vector, then all diagonal elements of $R^TR$ are positive. By the Schur product theorem (Theorem 7.5.3 (b), \citep{horn2012matrix}), if $G^{(i)}$ is positive definite and the diagonal elements of $R^TR$ are all positive, then $H^{(i)} = G^{(i)} \odot R^TR$ is positive definite.

$\QED$

\vspace{1.0em}

\textit{Proof of Theorem \ref{theorem2}}: Since the $W^*$ in (\ref{eq:7}) has zero diagonal, we only need to verify the equivalence of non-diagonal elements. Let us write $(C_{*i})_{-i}$ as $C_{*i, -i}$, then $W_{*i, -i}$ by (\ref{eq:m2}) is expressed as
\begin{equation*}
    W_{*i, -i} = -\frac{1}{1 - p}\frac{1}{C_{ii}}C_{*i, -i}
\end{equation*}

Thus, our goal is to prove
\begin{equation}
    (H_{-i}^{(i)})^{-1}v_{-i}^{(i)} = -\frac{1}{1 - p}\frac{1}{C_{ii}}C_{*i, -i} \text{ for any } i \in \{1, 2, ..., n\} \label{eq:a2} 
\end{equation}

To show this, first,
\begin{align}
    (H_{-i}^{(i)})^{-1}v_{-i}^{(i)} &= \left(G^- \odot (R^TR)_{-i}\right)^{-1} \cdot (1 - p)pa^2 (R^TR_{*i})_{-i} \nonumber \\
    &= \left(\frac{1}{(1 - p)pa^2}G^- \odot (R^TR)_{-i}\right)^{-1}(R^TR_{*i})_{-i} \nonumber \\
    &= \frac{1}{1 - p}\left((R^TR)_{-i} + \frac{p}{1 - p}I \odot (R^TR)_{-i}\right)^{-1}(R^TR_{*i})_{-i} \label{eq:a3}
\end{align}

Next, remember that $C^{-1} = R^TR + \frac{p}{1 - p}I \odot R^TR$. Since no column of $R$ is a zero vector, $I \odot R^TR$ is positive definite, thus $C^{-1}$ is positive definite. By the properties of matrix inverse, for an invertible matrix $E$, if we swap the $i$-th and $j$-th rows (or columns) of $E$ and get $E'$, then ${E'}^{-1}$ is equivalent to the matrix formed by swapping the $i$-th and $j$-th columns (or rows) of $E^{-1}$. Therefore, suppose $(C^{-1})^{\langle i\rangle}$ is obtained from $C^{-1}$ by first swapping the $k$th row with the $(k+1)$th row for $k = i, i+1, ..., n - 1$ sequentially, then swapping the $k$th column with the $(k+1)$th column for $k = i, i+1, ..., n - 1$ sequentially. We have
\begin{equation*}
    (C^{-1})^{\langle i\rangle} = \begin{bmatrix}
        (R^TR)_{-i} + \frac{p}{1 - p}I \odot (R^TR)_{-i} & (R^TR_{*i})_{-i} \\
        
        \vspace{-0.3em} \\
        
        (R^TR_{*i})_{-i}^T & \frac{1}{1 - p}(R^TR)_{ii}
    \end{bmatrix}
\end{equation*}
and 
\begin{equation*}
    \left((C^{-1})^{\langle i\rangle}\right)^{-1} = C^{\langle i\rangle} = \begin{bmatrix}
        M & C_{*i, -i} \\

        \vspace{-0.3em} \\
        
        C_{*i, -i}^T & C_{ii}
    \end{bmatrix}
\end{equation*}
where $M$ is an $(n - 1) \times (n - 1)$ matrix that we are not interested in.

By the symmetric block matrix inverse (0.7.3, \citep{horn2012matrix}), we know that
\begin{equation*}
    \begin{bmatrix}
        A & B^T \\

        \vspace{-0.3em} \\
        
        B & D
    \end{bmatrix}^{-1} = 
    \begin{bmatrix}
        A^{-1} + A^{-1}B^TS^{-1}BA^{-1} & - A^{-1}B^TS^{-1} \\

        \vspace{-0.3em} \\
        
        - S^{-1}BA^{-1} & S^{-1} \\
    \end{bmatrix}
\end{equation*}
where $S = D - BA^{-1}B^T$.

Let $A = (R^TR)_{-i} + \frac{p}{1 - p}(I \odot (R^TR)_{-i})$, $B^T = (R^TR_{*i})_{-i}$ and $S^{-1} = C_{ii}$, we have
\begin{equation}
    C_{*i, -i} = - A^{-1}B^TS^{-1} = -\left((R^TR)_{-i} + \frac{p}{1 - p}I \odot (R^TR)_{-i}\right)^{-1}(R^TR_{*i})_{-i} \cdot C_{ii} \label{eq:a4}
\end{equation}

Combining (\ref{eq:a4}) and (\ref{eq:a3}), we get (\ref{eq:a2}), thereby completing the proof.

$\QED$

\vspace{1.0em}

\textit{Proof of Proposition \ref{proposition1}}: 

(a) Similar to (\ref{eq:5.5}), we can expand objective function of (\ref{eq:8}) as
\begin{align*}
    l_{\mathcal{B}}(W) + \lambda\|W\|_F^2 &= \sum_{i=1}^n h_{\mathcal{B}}^{(i)}(W_{*i}), \quad \text{ where } \\
    h_{\mathcal{B}}^{(i)}(W_{*i}) &= W_{*i}^TH^{(i)}W_{*i} - 2W_{*i}^Tv^{(i)} + \lambda \|W_{*i}\|_F^2 + \mathbb{E}_{\Delta\sim\mathcal{B}}\left[R_{*i}^T{A^{(i)}}^2R_{*i}\right]
\end{align*}

Hence, $\left[\frac{\partial h_{\mathcal{B}}^{(i)}}{\partial W_{*i}}\right]^T = 2H^{(i)}W_{*i} - 2v^{(i)} + 2\lambda W_{*i}$, and the solution of $\left[\frac{\partial h_{\mathcal{B}}^{(i)}}{\partial W_{*i}}\right]^T = 0$ becomes
\begin{equation}
    W_{*i}^* = \left(H^{(i)} + \lambda I\right)^{-1}v^{(i)} \label{eq:a5}
\end{equation}

The optimal $W^*$ is obtained by solving $W_{*i}^*$ via (\ref{eq:a5}) for all $i$. The solution is unique since each $h_{\mathcal{B}}^{(i)}$ is strictly convex.

(b) (\ref{eq:9}) is equivalent to solving for the stationary points of the following a Lagrangian function
\begin{equation*}
    \mathcal{L}(W,\, \mu) = l_{\mathcal{B}}(W) + \mu^T \text{diag}(W)
\end{equation*}

where $\mu \in \mathbb{R}^n$. Since
\begin{align*}
    \mathcal{L}(W,\, \mu) &= \sum_{i=1}^n \bar{h}_{\mathcal{B}}^{(i)}(W_{*i}, \mu_i), \quad \text{ where } \\
    \bar{h}_{\mathcal{B}}^{(i)}(W_{*i}, \mu_i) &= W_{*i}^TH^{(i)}W_{*i} - 2W_{*i}^Tv^{(i)} + \mu_iW_{ii} + \mathbb{E}_{\Delta\sim\mathcal{B}}\left[R_{*i}^T{A^{(i)}}^2R_{*i}\right]
\end{align*}
the solution $(W, \mu)$ of the system of equations 
\begin{align*}
    \left[\frac{\partial \mathcal{L}}{\partial W}\right]^T &= \left[\left[\frac{\partial \mathcal{L}}{\partial W_{*1}}\right]^T, \left[\frac{\partial \mathcal{L}}{\partial W_{*2}}\right]^T, ..., \left[\frac{\partial \mathcal{L}}{\partial W_{*n}}\right]^T\right] = \left[\left[\frac{\partial \bar{h}_{\mathcal{B}}^{(1)}}{\partial W_{*1}}\right]^T, \left[\frac{\partial \bar{h}_{\mathcal{B}}^{(2)}}{\partial W_{*2}}\right]^T, ..., \left[\frac{\partial \bar{h}_{\mathcal{B}}^{(n)}}{\partial W_{*n}}\right]^T\right]  = 0 \\
    \left[\frac{\partial \mathcal{L}}{\partial \mu}\right]^T &= \left[\frac{\partial \mathcal{L}}{\partial \mu_1}, \frac{\partial \mathcal{L}}{\partial \mu_2}, ..., \frac{\partial \mathcal{L}}{\partial \mu_n}\right]^T = \left[\frac{\partial \bar{h}_{\mathcal{B}}^{(1)}}{\partial \mu_1}, \frac{\partial \bar{h}_{\mathcal{B}}^{(2)}}{\partial \mu_2}, ..., \frac{\partial \bar{h}_{\mathcal{B}}^{(n)}}{\partial \mu_n}\right]^T = 0
\end{align*}
is given by taking
\begin{align}
    \left[\frac{\partial \bar{h}_{\mathcal{B}}^{(i)}}{\partial W_{*i}}\right]^T &= 2H^{(i)}W_{*i} - 2v^{(i)} + \mu_il^{(i)} = 0 \label{eq:29} \\
    \frac{\partial \bar{h}_{\mathcal{B}}^{(i)}}{\partial \mu_i} &= W_{ii} = 0 \label{eq:30}
\end{align}
for $i = 1, 2, ..., n$. Solving (\ref{eq:29}), we get
\begin{equation}
    W_{*i} = {H^{(i)}}^{-1}(v_i - \frac{1}{2}\mu_i l^{(i)}) \label{eq:31}
\end{equation}

Combining (\ref{eq:30}) and (\ref{eq:31}), we have
\begin{equation}
    W_{ii} = ({H^{(i)}}^{-1}v^{(i)})_{i} - \frac{1}{2}\mu_i({H^{(i)}}^{-1}l^{(i)})_{i} = 0 \;\Longrightarrow\; \mu_i = 2\frac{({H^{(i)}}^{-1}v^{(i)})_{i}}{({H^{(i)}}^{-1}l^{(i)})_{i}} \label{eq:32}
\end{equation}

Finally, plugging (\ref{eq:33}) into (\ref{eq:31}), we get the solution of $W^*$: For any $i$,
\begin{equation}
    W_{*i}^* = {H^{(i)}}^{-1}(v^{(i)} - \frac{({H^{(i)}}^{-1}v^{(i)})_{i}}{({H^{(i)}}^{-1}l^{(i)})_{i}}l^{(i)}) = {H^{(i)}}^{-1}v^{(i)} - \frac{({H^{(i)}}^{-1}v^{(i)})_{i}}{({H^{(i)}}^{-1}l^{(i)})_{i}}{H^{(i)}}^{-1}l^{(i)} \label{eq:33}
\end{equation}

The solution (\ref{eq:33}) is unique. By \textit{second order sufficiency conditions} (Section 11.5, \citep{ye2008linear}), one can show that any $W^*$ that minimizes $\mathcal{L}(W,\, \mu)$ is a strict local minimizer. Thus, the solution (\ref{eq:33}) gives the global minimizer.

$\QED$

\section{Improved Complexity for the Fast Algorithm}
\label{ap:b}

As discussed in Section \ref{sec:4}, when using \textit{basic} algorithms for matrix multiplication and inversion, our Fast Algorithm a computational cost of $O(\max (m + n)n^2)$. The main bottleneck lies in the precomputing stage: the $m \times n$ product $R^TR$ costs $O(mn^2)$; the $n \times n$ inversion $H_0^{-1}$ costs $O(n^3)$; and multiplying $H_0^{-1}$ with $n \times n$ matrices $U \odot R^TR$ and $G_1 \odot R^TR$ costs $O(n^3)$. The following corollary shows that these costs can be reduced when \textit{advanced} matrix multiplication and inversion algorithms are applied.

\begin{corollary}
    \label{proposition_c}
    (a) If $R$ is sparse, contains only integer elements, and has $k$ nonzero elements $(\max(m, n) < k < mn)$, then $R^TR$ can be computed with complexity $O((2k + n^2)^{1.346})$.

    (b) The cost of computing $H_0^{-1}$ can be reduced to $O(n^{2.376})$, but cannot be improved beyond $\Omega (n^2\log n)$.

    (c) The cost of computing $H_0^{-1}(U \odot R^TR)$ and $H_0^{-1}(U \odot R^TR)$ can each be reduced to $O(n^{2.376})$, but cannot be improved beyond $\Omega (n^2\log n)$.
\end{corollary}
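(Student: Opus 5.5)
The plan is to treat each part as a direct application of known fast linear-algebra results. The work is mainly in checking that our specific matrices satisfy the hypotheses of those results, not in any new argument.

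\textbf{Part (a).} I would cast $R^TR$ as a sparse integer matrix product. The inputs $R^T$ and $R$ together have $2k$ nonzeros, and the output is $n\times n$, so it has at most $n^2$ nonzeros. I would then invoke the output-sensitive sparse integer matrix multiplication bound of \citep{abboud2024time}, which runs in time $O((\text{input nnz} + \text{output nnz})^{1.346})$, with input nnz $=2k$ and output nnz $\le n^2$. The assumption $\max(m,n)<k<mn$ only ensures the sparse regime is nontrivial (every row and column can be nonzero); I would note that it is not otherwise used.

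\textbf{Part (b).} For the upper bound, I would use that matrix inversion is computationally equivalent to matrix multiplication. By the Bunch--Hopcroft reduction via block (Schur complement) recursion, inversion costs $O(M(n))$, where $M(n)$ is the cost of multiplying two $n\times n$ matrices; Coppersmith--Winograd gives $M(n)=O(n^{2.376})$. The recursion needs the leading principal blocks to be invertible at every level. This holds here because $H_0=G_0\odot R^TR$ is positive definite: $G_0$ has the form of the matrices in the proof of Theorem~\ref{theorem1} (a positive multiple of the identity plus a positive semidefinite rank-one matrix), so the Schur product argument applies. Hence every principal submatrix, and every Schur complement, is positive definite, and the recursion goes through without pivoting. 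Forming $G_0\odot R^TR$ itself costs only $O(n^2)$.

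For the lower bound, I would cite Raz's result that matrix multiplication requires $\Omega(n^2\log n)$ operations in the bounded-coefficient arithmetic circuit model. The reverse reduction transfers this to inversion: the product $XY$ appears as a block of the inverse of the block upper-triangular matrix $\begin{bmatrix} I & X & 0\\ 0 & I & Y\\ 0&0&I\end{bmatrix}$. So any inversion algorithm yields a multiplication algorithm of the same order, and the $\Omega(n^2\log n)$ bound carries over.

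\textbf{Part (c).} The matrices $U\odot R^TR$ and $G_1\odot R^TR$ each cost $O(n^2)$ to form, so each product with $H_0^{-1}$ is a single dense $n\times n$ multiplication, costing $O(n^{2.376})$. The lower bound follows directly from Raz's theorem.

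\textbf{Main obstacle.} The hardest step is stating the lower bounds honestly. Raz's bound holds only in a restricted computational model (bounded-coefficient circuits), so I would state that restriction explicitly. I would also argue that the multiplication-to-inversion reduction in (b) preserves that model: the embedding matrix has coefficients $0$ and $1$, and the product is read off directly as a block of the inverse.
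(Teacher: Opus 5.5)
Your route is the paper's route. For (a) you use Abboud et al. For (b) you use the inversion/multiplication equivalence together with Coppersmith--Winograd and Raz; your Bunch--Hopcroft recursion and $3n\times 3n$ block embedding are exactly Theorems 28.2 and 28.1 of Cormen et al., which the paper cites. For (c) you use the same two results. Your positive-definiteness check on $H_0$ is correct: $G_0=c\,(pI+(1-p)J)$ with $J$ the all-ones matrix and $c=(1-p)pa^2+(1-p)^2b^2>0$, followed by the Schur product theorem, given no zero column of $R$. It is not needed, though, since the general reduction handles any nonsingular matrix via $A^{-1}=(A^TA)^{-1}A^T$.

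Two details need correcting. First, in (a) the hypothesis $k>\max(m,n)$ is not idle. As the paper invokes the sparse bound, it applies to an $x\times y$ by $y\times z$ product only when $m_{\mathrm{in}}\ge\max(x,y,z)$. For $R^T\cdot R$ this reads $2k\ge\max(m,n)$, which is exactly what the hypothesis supplies. You should verify this precondition rather than declare the assumption unused.

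Second, your claim that the reduction in (b) preserves Raz's model does not hold as stated. Raz's $\Omega(n^2\log n)$ bound concerns division-free circuits (product gates, and addition gates with coefficients of absolute value at most $1$) computing $(X,Y)\mapsto XY$. Anything that inverts every nonsingular $3n\times 3n$ matrix computes rational functions, so it must divide, and as an algorithm it typically also pivots or branches. Substituting the $0/1$ block matrix does not turn such a procedure into a bounded-coefficient division-free circuit for $XY$. Eliminating the divisions in the style of Strassen is not automatic, because a denominator may vanish identically on that substitution. Nor is it guaranteed to keep the coefficients bounded.

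The paper's proof has the same weakness, less visibly: it combines the algorithmic equivalence $I(n)=\Theta(M(n))$ with Raz's circuit lower bound without comment. So in both versions the inversion lower bound in (b) is heuristic, and you should present it that way rather than claim the model is preserved. The upper bounds are fine. The lower bound in (c) is also fine, read as a statement about general $n\times n$ multiplication in Raz's model.
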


\textit{Proof}:

(a) By the analysis of \citep{abboud2024time}, consider the multiplication of sparse integer matrices $A ^ {x\times y}$ and $B ^ {y \times z}$. Let $m_{\text{in}}$ be the total number of nonzeros in the inputs $A$ and $B$, and let $m_{\text{out}}$ be the number of nonzeros in the output $AB$. If $m_{\text{in}} \ge \max(x, y, z)$, then the matrix multiplication can be computed with complexity $O((m_{\text{in}} + m_{\text{out}})^{1.346})$.

For computing $R^TR$, we have $m_{\text{in}} = 2k$ and $m_{\text{out}} \le n^2$, so the total complexity is $O((2k + n^2)^{1.346})$.

(b) This proof mainly follows \citep{tveit2003complexity}. By Theorem 28.1 and Theorem 28.2 in \citep{cormen2009introduction}, let $I(n)$ be the complexity of inverting any $n \times n$ nonsingular matrix, and $M(n)$ be the complexity of multiplying two $n \times n$ matrices, then $I(n) = \Theta (M(n))$. That is, the complexity of matrix inversion is asymptotically both upper- and lower-bounded by the complexity of matrix multiplication.

Using the Coppersmith-Winograd algorithm \citep{coppersmith1987matrix}, one of the fastest known algorithms for multiplying two $n \times n$ matrices, we have $M(n) = O(n^{2.376})$, which gives the upper bound $I(n) = O(n^{2.376})$.

Moreover, \citep{raz2002complexity} proved that the complexity multiplying two $n \times n$ matrices cannot be better than $M(n) = \Omega (n^2\log n)$. Thus $I(n) = \Omega (n^2\log n)$.

(c) Following (b), we have the upper bound $M(n) = O(n^{2.376})$ and the lower bound $M(n) = \Omega (n^2\log n)$.

$\QED$

Corollary \ref{proposition_c} shows that it is possible to reduce the complexity of the Fast Algorithm to $O((2k + n^2)^{1.346} + n^{2.376})$ by choosing efficient algorithms for matrix multiplication and inversion; however, it is impossible to reduce it below $\Omega (n^2\log n)$. It is easy to check that the same conclusion also applies to computing the closed-form solutions of EASE and EDLAE.

In the proof, the algorithms of Coppersmith-Winograd and Abboud et al. are used to show that this low complexity is theoretically achievable; however, they may not be practical to implement.

\section{DEQL with Low-Rank Constraint}
\label{ap:2}

This section discusses the closed-form solution of (\ref{eq:2}) under low-rank constraint of $W$. Given the rank $k\,(k \le n)$, we would like to solve
\begin{equation}
    \underset{W}{\text{argmin}}\;l_{\boldsymbol{\mathcal{D}}}(W) = \sum_{i=1}^n\mathbb{E}_{(X, Y) \sim \mathcal{D}^{(i)}}\left[\|Y_{*i} - XW_{*i}\|_F^2\right]\quad \text{s.t.} \; \text{rank}(W) \le k \label{eq:35}
\end{equation}

\begin{theorem}
\label{theoremB1}
Suppose $\mathbb{E}_{(X, Y) \sim \mathcal{D}^{(i)}}[X^TX]$ is independent of $i$, and denote 
\begin{align*}
    \Sigma_{xx} &= \mathbb{E}_{(X, Y) \sim \mathcal{D}^{(i)}}[X^TX] \\
    \Sigma_{xy} &= \left[\mathbb{E}_{(X, Y) \sim \mathcal{D}^{(1)}}[X^TY_{*1}], \mathbb{E}_{(X, Y) \sim \mathcal{D}^{(2)}}[X^TY_{*2}], ..., \mathbb{E}_{(X, Y) \sim \mathcal{D}^{(n)}}[X^TY_{*n}]\right]
\end{align*}

If $\Sigma_{xx}$ is non-singular, then the closed-form solution of (\ref{eq:35}) is given by
\begin{equation}
    W^* = \Sigma_{xx}^{-1/2}\left[\Sigma_{xx}^{-1/2}\Sigma_{xy}\right]_k \label{eq:36}
\end{equation}
Here, let $\Sigma_{xx}^{-1/2}\Sigma_{xy} = U\begin{bmatrix}\sigma_1 & & & \\
& \sigma_2 & &  \\
& & ... & \\
& & &  \sigma_n\end{bmatrix}V^T$ be the singular value decomposition where $\sigma_1 \ge \sigma_2 ... \ge \sigma_n$, we denote $\left[\Sigma_{xx}^{-1/2}\Sigma_{xy}\right]_k = \sum_{i=1}^k\sigma_iU_{*i}V_{*i}^T$.
\end{theorem}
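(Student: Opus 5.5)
The plan is to reduce the problem to a standard reduced-rank regression by completing the square. When $\Sigma_{xx}$ does not depend on $i$, the objective in (\ref{eq:35}) reassembles into a weighted Frobenius norm of $W$. By (\ref{eq:2}), each column term is $W_{*i}^T\Sigma_{xx}W_{*i} - 2W_{*i}^T(\Sigma_{xy})_{*i} + c_i$, where $c_i = \mathbb{E}_{\mathcal{D}^{(i)}}[Y_{*i}^TY_{*i}]$ does not depend on $W$. Summing over $i$ gives
\begin{equation*}
l_{\boldsymbol{\mathcal{D}}}(W) = \text{tr}(W^T\Sigma_{xx}W) - 2\,\text{tr}(W^T\Sigma_{xy}) + \sum_i c_i .
\end{equation*}
The same rewriting would fail if $\Sigma_{xx}$ varied with $i$, which is why that hypothesis appears in the statement.

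Next, $\Sigma_{xx}$ is positive semi-definite as an expectation of PSD matrices, and it is non-singular by assumption, so it is positive definite. It therefore has a symmetric positive definite square root $\Sigma_{xx}^{1/2}$ with inverse $\Sigma_{xx}^{-1/2}$. Set $Z = \Sigma_{xx}^{1/2}W$ and $B = \Sigma_{xx}^{-1/2}\Sigma_{xy}$. Then $\text{tr}(W^T\Sigma_{xx}W) = \|Z\|_F^2$ and $\text{tr}(W^T\Sigma_{xy}) = \text{tr}(Z^TB)$. Completing the square gives
\begin{equation*}
l_{\boldsymbol{\mathcal{D}}}(W) = \|Z - B\|_F^2 - \|B\|_F^2 + \sum_i c_i .
\end{equation*}
Since $\Sigma_{xx}^{1/2}$ is invertible, $\text{rank}(Z) = \text{rank}(W)$. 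So the map $W \mapsto Z$ is a bijection of the feasible set $\{\text{rank}(W)\le k\}$ onto $\{\text{rank}(Z)\le k\}$, and (\ref{eq:35}) is equivalent to
\begin{equation*}
\min_{\text{rank}(Z)\le k}\|Z - B\|_F^2 .
\end{equation*}

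By the Eckart--Young--Mirsky theorem, this problem is minimized by the truncated SVD $Z^* = [B]_k = \sum_{i=1}^k \sigma_i U_{*i}V_{*i}^T$. Mapping back gives $W^* = \Sigma_{xx}^{-1/2}[\Sigma_{xx}^{-1/2}\Sigma_{xy}]_k$, which is (\ref{eq:36}). If $\sigma_k = \sigma_{k+1}$ the minimizer is not unique, but (\ref{eq:36}) is still a minimizer; I would mention this briefly.

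The main obstacle is the reduction step. One must check that the cross term and the quadratic term share the single weighting $\Sigma_{xx}$; this is exactly where independence of $i$ is used. One must also check that the reparametrization preserves the rank constraint, which follows from the invertibility of $\Sigma_{xx}^{1/2}$. The remaining steps are standard, and I would cite Eckart--Young--Mirsky rather than reprove it.
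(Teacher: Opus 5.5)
Your proposal is correct and follows essentially the same route as the paper. Both arguments complete the square to obtain $\|\Sigma_{xx}^{1/2}W - \Sigma_{xx}^{-1/2}\Sigma_{xy}\|_F^2$ plus a $W$-independent constant and then apply Eckart--Young--Mirsky; your explicit checks that $W \mapsto \Sigma_{xx}^{1/2}W$ preserves the rank constraint and that the minimizer need not be unique when $\sigma_k = \sigma_{k+1}$ fill in points the paper leaves implicit.
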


\textit{Proof}: Denote $y^{(i)} = \mathbb{E}_{(X, Y) \sim \mathcal{D}^{(i)}}[Y_{*i}^TY_{*i}]$. By (\ref{eq:2}),
\begin{align*}
    l_{\mathcal{D}}(W) &= \sum_{i=1}^n W_{*i}^T\Sigma_{xx}W_{*i} - 2W_{*i}^T\left(\Sigma_{xy}\right)_{*i} + y^{(i)} \\
    &= \sum_{i=1}^n \left(\Sigma_{xx}^{1/2}W_{*i}\right)^T\Sigma_{xx}^{1/2}W_{*i} - 2\left(\Sigma_{xx}^{1/2}W_{*i}\right)^T\Sigma_{xx}^{-1/2}\left(\Sigma_{xy}\right)_{*i} + y^{(i)} \\
    &= \sum_{i=1}^n \left\|\Sigma_{xx}^{1/2}W_{*i} - \Sigma_{xx}^{-1/2}\left(\Sigma_{xy}\right)_{*i}\right\|_F^2 + y^{(i)} - \left(\Sigma_{xy}\right)_{*i}^T\Sigma_{xx}^{-1}\left(\Sigma_{xy}\right)_{*i} \\
    &= \left\|\Sigma_{xx}^{1/2}W - \Sigma_{xx}^{-1/2}\Sigma_{xy}\right\|_F^2 + \sum_{i=1}^n y^{(i)} - \left(\Sigma_{xy}\right)_{*i}^T\Sigma_{xx}^{-1}\left(\Sigma_{xy}\right)_{*i}
\end{align*}

By Eckart–Young–Mirsky theorem, $\left[\Sigma_{xx}^{-1/2}\Sigma_{xy}\right]_k = \underset{\text{rank}(Q) \le k}{\text{argmin}} \left\|Q - \Sigma_{xx}^{-1/2}\Sigma_{xy}\right\|_F^2$ for any $Q \in \mathbb{R}^{n\times n}$. Therefore, $\Sigma_{xx}^{1/2}W^* = \left[\Sigma_{xx}^{-1/2}\Sigma_{xy}\right]_k \Longrightarrow W^* = \Sigma_{xx}^{-1/2}\left[\Sigma_{xx}^{-1/2}\Sigma_{xy}\right]_k$. $\QED$

Note that the low-rank solution (\ref{eq:36}) is not applicable when $\Sigma_{xx} = \mathbb{E}_{(X, Y) \sim \mathcal{D}^{(i)}}[X^TX]$ depends on $i$: If $\Sigma_{xx}$ varies with $i$, then in the proof $\sum_{i=1}^n \left\|\Sigma_{xx}^{1/2}W_{*i} - \Sigma_{xx}^{-1/2}\left(\Sigma_{xy}\right)_{*i}\right\|_F^2$ cannot be combined into $\left\|\Sigma_{xx}^{1/2}W - \Sigma_{xx}^{-1/2}\Sigma_{xy}\right\|_F^2$.

The low-rank solution (\ref{eq:36}) is applicable to EDLAE only when taking $a = b$: by (\ref{eq:5}), $\Sigma_{xx}$ is represented by $H^{(i)}$; by Lemma \ref{lemma1}, if $a = b$, $H^{(i)}$ will have all diagonal entries being $(1 - p)b^2$ and all off-diagonal entries being $(1 - p)^2b^2$ for any $i$, thus being independent of $i$. However, when $a \ne b$, $H^{(i)}$ will depend on $i$, making (\ref{eq:35}) not applicable.

\section{Related Works}
\label{ap:c}

The evolution of collaborative filtering (CF) in recommendation systems has undergone several key paradigm shifts. In its early stages, neighborhood-based methods dominated the field, with influential works such as user-item KNN approaches \citep{hu2008collaborative} and sparse linear models (SLIM) \citep{ning2011slim} setting the foundation. However, the Netflix Prize competition marked a turning point, accelerating the adoption of matrix factorization (MF) techniques, which offered improved scalability and latent feature learning \citep{mfsurvey}. These models aim to solve a matrix completion problem, which has been extensively studied theoretically \citep{candes2010power, recht2011simpler, foygel2011learning, shamir2011collaborative}.

The rise of deep learning \citep{lecun2015deep} further revolutionized the landscape, introducing more expressive neural architectures. Among these, graph-based models gained prominence, including Neural Collaborative Filtering (NCF) \citep{he2017neural}, which replaced traditional MF with neural networks, and later refinements like Neural Graph Collaborative Filtering (NGCF) \citep{wang2019neural} and LightGCN \citep{he2020lightgcn}, which explicitly leveraged graph structures for higher-order user-item relationship modeling. Simultaneously, industry-scale solutions emerged, blending memorization and generalization through hybrid architectures such as Wide \& Deep \citep{cheng2016wide}, DeepFM \citep{guo2017deepfm}, and Deep \& Cross Networks (DCN) \citep{wang2017deep}, which automated feature interactions while maintaining interpretability.

Alongside the ongoing research that explores various models to enhance recommendation performance, the research community has gradually recognized the importance of gaining a deeper theoretical understanding of loss functions \citep{terven2025comprehensive, wu2024bsl}. These theoretical investigations seek to reveal the fundamental principles and mathematical underpinnings that govern the behavior and optimization direction of recommendation systems, thereby advancing our overall understanding of how these systems function. BPR \citep{rendle2009bpr}. Early methods often adopted pointwise $L_2$ loss over observed ratings or implicit feedback \citep{hu2008collaborative}, which is simple and analytically tractable. Later, pairwise ranking losses such as BPR \citep{rendle2009bpr} became popular for top-K recommendation, optimizing relative preferences between positive and negative items. Softmax-based listwise losses, such as sampled softmax \citep{jannach2010recommender} were introduced to better align with ranking metrics like NDCG. In recent years, contrastive learning frameworks \citep{zhou2021contrastive, wang2022towards} have gained prominence as a powerful and effective approach, particularly in unsupervised recommendation scenarios. Notable studies such as \citep{li2023revisiting, liu2021contrastive} have further showcased their effectiveness in this domain.

Notably, recent studies \citep{steck2019embarrassingly, steck2020autoencoders, moon2023relaxing} have shown that well-tuned linear models can outperform deep nonlinear models \citep{liang2018variational} on sparse implicit-feedback data, challenging the assumption that greater model expressiveness necessarily leads to better performance. Moreover, these models are typically neighborhood-based and item-based, offering several advantages over deep learning approaches: more accurate recommendations based on a small number of high-confidence neighbors, substantially lower training cost, more interpretable recommendation mechanisms, and greater stability \citep{nikolakopoulos2021trust}.

%Notably, recent studies \citep{rendle2010factorization} have shown that well-tuned linear models can outperform more complex deep architectures on sparse implicit data, challenging the assumption that greater model expressiveness always yields better performance. At the same time, alternative objectives such as pairwise ranking losses \citep{rendle2009bpr}, listwise softmax, and contrastive formulations \citep{zhou2021contrastive, wang2021understanding} -- though popular -- often suffer from instability, sensitivity to negative sampling, and increased computational overhead. Motivated by these findings, we adopt a different perspective: rather than seeking more complex losses or models, we focus on refining linear models under the classical $L_2$ loss.

LAEs are one type of the linear recommender models. One of the earliest LAE model is SLIM \citep{ning2011slim}, which trains the loss $\|R - RW\|_F^2$ with $L_1$ and $L_2$ regularizers, together with zero-diagonal and non-negativity constraints on $W$. EASE \citep{steck2019embarrassingly} simplifies SLIM by retaining only the $L_2$ regularizer and the zero-diagonal constraint. EDLAE \citep{steck2020autoencoders} instead employs dropout and emphasis as an alternative strategy to mitigate overfitting. ELSA \citep{vanvcura2022scalable} construct the model $W$ with a zero diagonal by enforcing $W = AA^T - I$ for some matrix $A$ subject to $\|A_{i*}\|_2^2 = 1$ for all $i$. \citep{moon2023relaxing} shows that a strict zero-diagonal constraint does not always yield the best performance, and that replacing it with a diagonal bounded by a small norm during training can improve results.

%Finally, we would like to point out the relationship between linear autoencoders (LAEs) and model explainability.

%There are several advantages that linear models like LAEs over deep learning-based models. 

%First, linear models typically achieve better performance than deep models on sparse datasets. Such datasets often occur 

%Next, LAE-based linear models typically has a closed-form solution, which facilitate hyperparameter tuning, while the deep models typically rely on gradient descent based optimization and does not yield a closed-form solution.

Finally, an important research direction for LAE models is interpretability. LAE-based architectures provide a uniquely transparent mapping between input and output representations through a single linear operator $W$, where each element $W_{ij}$ quantifies how item $i$ contributes to predicting item $j$. A larger magnitude of $W_{ij}$, regardless of sign, indicates a stronger relatedness from $i$ to $j$ \citep{spivsak2024interpretability}. This white-box structure makes LAEs inherently interpretable compared with matrix factorization and deep neural recommenders, whose latent dimensions are unidentifiable or highly nonlinear. This interpretability direction aligns with broader developments in machine learning, where linear and sparse representations are increasingly used to reveal structure inside complex neural systems. In particular, Sparse Autoencoders (SAEs) trained on large language models have been shown to uncover highly interpretable and often monosemantic latent features \cite{huben2023sparse,felarchetypal,demircansparse}. These findings extend earlier insights that deep activations can be linearly decomposed into disentangled semantic directions, a principle also supported by linear probes \cite{alain2016linearprobes}. Complementary approaches such as LIME \cite{ribeiro2016should} further demonstrate how local linear surrogates can explain the predictions of arbitrary black-box models, reinforcing the idea that linearity provides a powerful lens for interpretability. Building on this insight, our proposed DEQL framework extends the explanatory power of LAEs, preserving their interpretability while enhancing expressive capacity.

\section{Supplemental Experiments}
\label{ap:d}

\subsection{Statistical Significance Test}
\label{ap:d1}

Since the DEQL model is obtained via closed-form solution rather than gradient descent, each $b$ corresponds to a unique output model, thus there is no randomness in the training process. The only source of randomness arises from the train/validation/test data splitting. So it is sufficient to show that the performance improvements are unlikely affected by this type of randomness.

%Since the closed-form solution of DEQL is deterministic, each $b$ corresponds to a fixed model, so any statistical noise in test performance cannot be attributed to model randomness; rather, it arises from data randomness. To demonstrate that the performance improvements in Tables \ref{tb:1} and \ref{tb:2} are not due to statistical biases in dataset splitting, we conduct significance tests to confirm that the improvements of DEQL(L2) are statistically meaningful.

Table \ref{tb:1} reports results on a single dataset split. We fix the DEQL(L2) and DEQL(L2+zero-diag) models from Table \ref{tb:1}, generate five random train/validation/test splits for each dataset, and evaluate each model on each split. The mean performance and standard deviations are reported in Table \ref{tb:3}, and pairwise t-tests are presented in Table \ref{tb:4}. These analyses consistently show that the performance improvements of DEQL(L2) and DEQL(L2+zero-diag) are stable and statistically significant.

\begin{table}[htbp]
\centering

\vspace{-0.5em}

\caption{Mean performance ($\pm$ standard deviation) of DEQL(L2+zero-diag), DEQL(L2), and EDLAE.}

\vspace{0.3em}

\label{tab:gqlae-results}

\resizebox{\linewidth}{!}{
\begin{tabular}{lcccccccccccc}
\toprule
\multirow{2}{*}{Model} &
\multicolumn{2}{c}{Games} &
\multicolumn{2}{c}{Beauty} &
\multicolumn{2}{c}{Gowalla} &
\multicolumn{2}{c}{ML20M} &
\multicolumn{2}{c}{Netflix} &
\multicolumn{2}{c}{MSD} \\
\cmidrule(lr){2-3}
\cmidrule(lr){4-5}
\cmidrule(lr){6-7}
\cmidrule(lr){8-9}
\cmidrule(lr){10-11}
\cmidrule(lr){12-13}
& R@20 & N@20 & R@20 & N@20 & R@20 & N@20 & R@20 & N@20 & R@20 & N@20 & R@20 & N@20 \\
\midrule
EDLAE
& 0.2674 {\scriptsize$\pm$ 0.0144} & 0.1729 {\scriptsize$\pm$ 0.0030}
& 0.1526 {\scriptsize$\pm$ 0.0194} & 0.0969 {\scriptsize$\pm$ 0.0164}
& 0.2266 {\scriptsize$\pm$ 0.0019} & 0.2061 {\scriptsize$\pm$ 0.0027}
& 0.3971 {\scriptsize$\pm$ 0.0031} & 0.3482 {\scriptsize$\pm$ 0.0026}
& 0.3657 {\scriptsize$\pm$ 0.0012} & 0.3434 {\scriptsize$\pm$ 0.0006}
& 0.3317 {\scriptsize$\pm$ 0.0006} & 0.3233 {\scriptsize$\pm$ 0.0007} \\
DEQL(L2+diag)
& 0.2669 {\scriptsize$\pm$ 0.0188} & 0.1737 {\scriptsize$\pm$ 0.0027}
& 0.1517 {\scriptsize$\pm$ 0.0225} & 0.1001 {\scriptsize$\pm$ 0.0148}
& 0.2270 {\scriptsize$\pm$ 0.0026} & 0.2071 {\scriptsize$\pm$ 0.0031}
& 0.3972 {\scriptsize$\pm$ 0.0036} & 0.3482 {\scriptsize$\pm$ 0.0027}
& 0.3657 {\scriptsize$\pm$ 0.0013} & 0.3433 {\scriptsize$\pm$ 0.0006}
& \textbf{0.3344} {\scriptsize$\pm$ 0.0005} & 0.3247 {\scriptsize$\pm$ 0.0015} \\

DEQL(L2)
& \textbf{0.2745} {\scriptsize$\pm$ 0.0153} & \textbf{0.1738} {\scriptsize$\pm$ 0.0067}
& \textbf{0.1567} {\scriptsize$\pm$ 0.0208} & \textbf{0.1023} {\scriptsize$\pm$ 0.0164}
& \textbf{0.2279} {\scriptsize$\pm$ 0.0023} & \textbf{0.2076} {\scriptsize$\pm$ 0.0029}
& \textbf{0.3974} {\scriptsize$\pm$ 0.0032} & \textbf{0.3484} {\scriptsize$\pm$ 0.0026}
& \textbf{0.3660} {\scriptsize$\pm$ 0.0012} & \textbf{0.3437} {\scriptsize$\pm$ 0.0006}
& 0.3333 {\scriptsize$\pm$ 0.0005} & \textbf{0.3251} {\scriptsize$\pm$ 0.0007} \\

\bottomrule
\end{tabular}
}
\label{tb:3}

\end{table}

% pairwise t-test
\begin{table}[htbp]
\centering

\vspace{-0.5em}

\caption{Pairwise t-tests results at significance level of $\alpha=0.05$. The null hypothesis $A \le B$ means that method $A$ performs no better than method $B$. Each cell reports the decision to reject (Rej) or accept (Acc) the null hypothesis and the corresponding p-value. If p-value $< \alpha$, the null hypothesis is rejected (i.e., the evidence supports that $A$ performs better than $B$).}

\vspace{0.3em}

\label{tab:stat-deql}

\resizebox{\linewidth}{!}{
\begin{tabular}{lcccccccccccc}
\toprule
\multirow{2}{*}{Null Hypothesis} &
\multicolumn{2}{c}{Games} &
\multicolumn{2}{c}{Beauty} &
\multicolumn{2}{c}{Gowalla} &
\multicolumn{2}{c}{ML20M} &
\multicolumn{2}{c}{Netflix} &
\multicolumn{2}{c}{MSD} \\
\cmidrule(lr){2-3}
\cmidrule(lr){4-5}
\cmidrule(lr){6-7}
\cmidrule(lr){8-9}
\cmidrule(lr){10-11}
\cmidrule(lr){12-13}
& R@20 & N@20 & R@20 & N@20 & R@20 & N@20 & R@20 & N@20 & R@20 & N@20 & R@20 & N@20 \\
\midrule

DEQL(L2) $\le$ DEQL(L2+diag)
& \shortstack{Rej \\ {\scriptsize(p=0.0411)}} 
& \shortstack{Acc  \\ {\scriptsize(p=0.4884)}}
& \shortstack{Acc  \\ {\scriptsize(p=0.1486)}}
& \shortstack{Rej \\ {\scriptsize(p=0.0353)}}
& \shortstack{Rej \\ {\scriptsize(p=0.0134)}}
& \shortstack{Rej \\ {\scriptsize(p=0.0164)}}
& \shortstack{Acc  \\ {\scriptsize(p=0.1924)}}
& \shortstack{Acc  \\ {\scriptsize(p=0.0523)}}
& \shortstack{Rej \\ {\scriptsize(p=0.0171)}}
& \shortstack{Rej \\ {\scriptsize(p=0.0001)}}
& \shortstack{Acc  \\ {\scriptsize(p=0.9938)}}
& \shortstack{Acc  \\ {\scriptsize(p=0.2165)}} \\

DEQL(L2) $\le$ EDLAE
& \shortstack{Rej \\ {\scriptsize(p=0.0137)}}
& \shortstack{Acc  \\ {\scriptsize(p=0.4008)}}
& \shortstack{Acc  \\ {\scriptsize(p=0.0729)}}
& \shortstack{Rej \\ {\scriptsize(p=0.0013)}}
& \shortstack{Rej \\ {\scriptsize(p=0.0083)}}
& \shortstack{Rej \\ {\scriptsize(p=0.0001)}}
& \shortstack{Rej \\ {\scriptsize(p=0.0033)}}
& \shortstack{Rej \\ {\scriptsize(p=0.0400)}}
& \shortstack{Rej \\ {\scriptsize(p=0.0401)}}
& \shortstack{Rej \\ {\scriptsize(p=0.0001)}}
& \shortstack{Rej \\ {\scriptsize(p=0.0000)}}
& \shortstack{Rej \\ {\scriptsize(p=0.0000)}} \\
\bottomrule
\end{tabular}
}
\label{tb:4}

\end{table}

\subsection{Learned Diagonal Values in DEQL(L2)}
\label{sec:f.2}

We visualize the distributions of the diagonal values learned by DEQL(L2) in Figure \ref{fig:diag_visualization}. The results show that the diagonal entries of the optimal DEQL(L2) models are typically small, with the vast majority remaining close to zero and sharp modes generally falling within the range of $0.01$--$0.10$ across datasets.

This pattern indicates that even without strict $\mathrm{diag}(W)=0$ constraint, the $L_2$ regularizer is still able to effectively suppress the diagonal terms, preventing them from growing into large. This is consistent with the findings of \citep{moon2023relaxing}, which demonstrate that relaxing the zero-diagonal constraint to allow small diagonal values can enhance performance.

Moreover, the optimal models on Yelp2018 and AmazonBook exhibit relatively larger diagonal values than those on other datasets, suggesting that the model places greater emphasis on the self-reconstruction of remained items. This observation may help explain the counterintuitive results in Figure \ref{fig:b_sensitivity}, where the $b/a$ ratio corresponding to peak performance on these two datasets exceeds $1$.

%explains why DEQL(L2) matches or slightly improves performance relative to its zero-diagonal counterpart.

%Overall, these findings reinforce our conclusion in RQ2: the zero-diagonal constraint becomes unnecessary once $b>0$ and regularization are present, because the optimization implicitly keeps the diagonal entries controlled. 

\begin{figure}[htbp]
    \centering
    \includegraphics[width=1.0\linewidth]{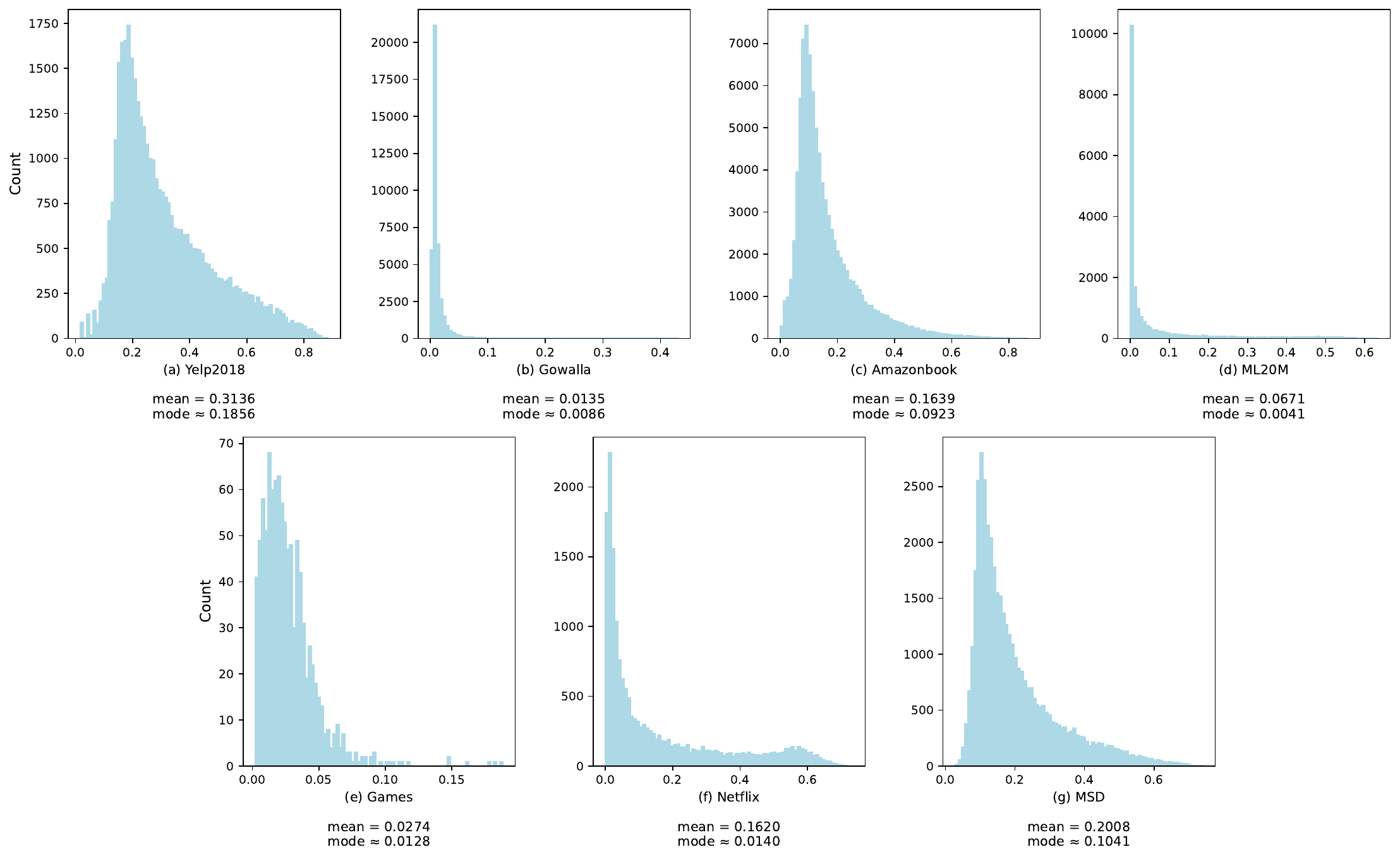}
    \caption{Diagonal Value Distribution Across different datasets}\label{fig:diag_visualization}
\end{figure}

\subsection{Time and Memory Cost}

%\subsection{Training Efficiency and Complexity}
%In general, DEQL families benefits from a closed-form formulation that avoids backpropagation, enabling training on Yelp2018 to finish in just 8 minutes on a CPU, which is much faster than all other advanced deep learning baselines as shown in Table \ref{tab:time_mem_yelp}. 

In this section, we compare the time and memory costs of DEQL with existing LAE-based and deep learning–based models. All models are evaluated on the Yelp2018 dataset. The LAE-based models are trained on CPU and solved via closed-form solutions, whereas the deep learning–based models are trained on GPU via gradient descent.

%Although DEQL requires approximately 80 GB of memory, it can be solved in around 8 minutes, making it substantially faster than the deep learning baselines even without GPU.

%It shows that for large datasets like Yelp 2018, DEQL families finish training in just 8 minutes on a CPU, which is much faster than all other advanced deep learning baselines.

The results are presented in Table \ref{tab:time_mem_yelp}, which show that LAE-based methods (including DEQL) exhibit higher memory consumption but lower training time than deep learning-based methods, \textit{even without using a GPU}. Note that LAE-based models and deep learning models follow distinct computational paradigms. Deep learning-based models train via batch gradient descent, loading only small batches of data into GPU memory at each step, which keeps memory usage around 1GB but requires many iterations, leading to longer training time. In contrast, LAE-based methods load the entire matrix 
 into CPU memory to compute its inverse as part of a closed-form solution, which may require 
80GB for datasets like Yelp2018. This space-time trade-off enables the entire training process to complete much faster.

Unlike GPU-intensive GNN-based models such as LightGCN or SSM (GNN), DEQL models are well suited for memory-limited or CPU-only environments. Modern CPU servers typically provide 500 GB - 1 TB of RAM, suggesting that memory limitations are unlikely to pose practical concerns. Moreover, as discussed in Appendix \ref{ap:g3}, moderately sized DEQL models (as well as other LAE models) can be readily adapted to a GPU environment for further acceleration.

%To sum, DEQL families offer a compelling trade-off between performance, interpretability, and efficiency. Its strong empirical results under both weak and strong generalization, combined with a closed-form training pipeline and analytical foundations, establish it as a solid and principled baseline --especially relevant amid growing interest in simple yet powerful non-neural (linear) recommendation methods. It is also worth noting that our methods could also benefit from running on GPU with block-wise matrix multiplication and inverse. While this is not our purpose in this paper, we do believe that leveraing GPU would both siginificant reduce our training time and memory usage.

\begin{table}[htbp]
\centering
\caption{Approximate Training time and memory usage on Yelp2018. Deep based models mainly consume GPU memory, while others rely on CPU memory.}

\vspace{0.3em}

\label{tab:time_mem_yelp}
\begin{tabular}{lccc}
\toprule
Model & Time (min) & Memory (GB) & Memory Type \\
\midrule
LightGCN         & 30  & 1   & GPU \\
SimpleX (GNN)    & 130 & 1   & GPU \\
SSM (MF)         & 13  & 1   & GPU \\
SSM (GNN)        & 13  & 1   & GPU \\
\toprule
DLAE             & 2   & 70  & CPU \\
EASE             & 2   & 60  & CPU \\
EDLAE            & 4   & 70  & CPU \\
DEQL             & 6   & 80  & CPU \\
DEQL (L2+zero-diag) & ~8   & 80  & CPU \\
DEQL (L2)        & 8   & 80  & CPU \\
\bottomrule
\end{tabular}
\end{table}

\section{Discussions}
\label{ap:e}

\subsection{Limitations}

%One limitation of our work is that, currently DEQL in our framework is only used for optimization, yielding closed-form solutions under given hyperparameters. It does not indicate which hyperparameter choice will improve testing performance. So the choice of hyperparameters still rely on empirical tuning and its theoretical understanding remains underexplored.

%it does not provide guidance on which parameter choices are likely to yield solutions with high test performance, as measured by metrics such as Recall@K or NDCG@K. Establishing a theoretical relationship between parameter selection and these evaluation metrics is challenging, as Recall@K and NDCG@K are difficult to analyze from a statistical viewpoint. As a result, parameter selection currently relies on empirical tuning.

One limitation of our work is that DEQL is currently used only as an optimization tool, providing closed-form solutions under given hyperparameters. However, it does not offer theoretical guidance on which hyperparameter choices lead to improved testing performance. As a result, hyperparameter selection still relies on empirical tuning, and its theoretical understanding remains underexplored. Our experiments show that datasets with a larger item-user ratio tend to perform well under large $b$ (Section \ref{sec:b}).

%(REMOVED) Another limitation lies in the computational complexity of the Algorithm in Section \ref{sec:4}. Even when $R$ is sparse, this algorithm still has an $O(n^3)$ complexity bottleneck due to the need to compute the inverse of $H_0$. By Theorem \ref{theorem1}, $H_0$ is positive definite if no columns of $R$ is a zero vector. In this case, the inverse of $H_0$ is typically performed by Cholesky decomposition \citep{krishnamoorthy2013matrix}, but it still costs $O(n^3)$.

%Although (\ref{eq:3}) provides a direct mapping from the distribution $\mathcal{D}$ to the optimal solution $W^*$, not every choice of $\mathcal{D}$ leads to a computationally efficient solution. While we demonstrate that for the case where $\mathcal{D}$ corresponds to $b > 0$ in EDLAE, an efficient algorithm that reduces the computational complexity from $O(n^4)$ to $O(n^3)$ exists, such reductions may not be available for other forms of $\mathcal{D}$. In those cases, the complexity of computing $W^*$ may remain at $O(n^4)$.

\subsection{Connection between EDLAE Objective and Mean Squared Error}
\label{ap:1.5}

In evaluation, the interaction matrix $R$ typically adopt a \textit{hold-out} split (Section 7.4.2, \citep{aggarwal2016recommender}): Let $\Delta \in \{0, 1\}^{m \times n}$ be a mask matrix, and $\mathcal{B}_{R}$ denote a multivariate Bernoulli distribution over $\Delta$ and conditioned on $R$, such that each $\Delta_{ij}$ independently satisfies $P(\Delta_{ij} = 0|R_{ij} = 1) = p$, $P(\Delta_{ij} = 1|R_{ij} = 1) = 1 - p$ and $P(\Delta_{ij} = 0|R_{ij} = 0) = 1$ for a given $p\in (0, 1)$. Then $\Delta \odot R$ represents the random hold out operation on $R$: for any $R_{ij} = 1$, if $\Delta_{ij} = 0$, then $R_{ij}$ is held out. Given any fixed model $W$ and dataset $R$, the \textit{Mean Squared Error (MSE)}, a commonly used evaluation metric (Section 7.5.1, \cite{aggarwal2016recommender}), can be viewed as a function of $\Delta$:
\begin{equation}
    \text{MSE}_{W, R}(\Delta) = \frac{1}{\|\Delta\|_F^2} 
    \|(\textbf{1} - \Delta)\odot (R -  (\Delta \odot R)W)\|_F^2 \label{eq:2.1}
\end{equation}
where $\textbf{1}\in \{1\}^{m \times n}$ denotes the all-one matrix and $\|\Delta\|_F^2$ equals the number of non-zeros in $\Delta$. When $\text{MSE}_{W, R}(\Delta)$ is used as the test error, we draw a realization $\Delta \sim \mathcal{B}_R$ and evaluate it.

The objective function of EDLAE can be interpreted as an evolution of (\ref{eq:2.1}) obtained by

1. Removing the scalar $\frac{1}{\|\Delta\|_F^2}$.

2. Replacing $\textbf{1} - \Delta$ with an \textit{emphasis matrix} $A \in \{a, b\}^{m \times n}$. If $(\textbf{1} - \Delta)_{ij} = 1$, then $A_{ij} = a$; if $(\textbf{1} - \Delta)_{ij} = 0$, then $A_{ij} = b$, where $a \ge b$. This is equivalent to placing greater emphasis on held-out (dropped) entries.

3. Relaxing $\mathcal{B}_R$ to $\mathcal{B}$ and taking the expectation over $\mathcal{B}$, where $\Delta \sim \mathcal{B}$ indicates that the entries $\Delta_{ij}$ are i.i.d. Bernoulli random variables with $P\left(\Delta_{ij} = 0\right) = p$ and $P\left(\Delta_{ij} = 1\right) = 1 - p$.

Consequently, the objective can be written as
%By introducing \textit{dropout} and an \textit{emphasis weighting} scheme, EDLAE effectively reshapes the loss to penalize reconstruction of masked entries more heavily, thereby reducing overfitting to the identity function, see (\ref{eq:m1}). We observe that considering the dropout matrix $\Delta$ as a random variable allows the objective to be expressed as an expected quadratic loss
\begin{align}
    f(W) = \mathbb{E}_{\Delta \sim \mathcal{B}}\left[\|A \odot (R - (\Delta \odot R)W)\|_F^2\right] \label{eq:1-1}
\end{align}

This observation shows that the EDLAE training objective aligns with the MSE used for evaluation. Moreover, previous empirical studies have shown that models with lower MSE are typically associated with better ranking performance, including a higher probability of relevant items appearing in top-ranked positions \citep{koren2008factorization}. In addition, \citep{guo2025pac} derive a generalization bound based on a relaxed MSE and show that a tighter bound is often correlated with improved top-K ranking metrics such as Recall and NDCG. 

Hence, one might intuitively expect that, given the alignment between EDLAE and MSE, minimizing the EDLAE objective would improve Recall and NDCG. However, our experimental results in Section \ref{sec:b} show that this intuition does not always hold, as models in the $b > a$ case, which no longer strictly align with MSE, can in some cases achieve superior Recall and NDCG performance.

% \subsection{Explanation for the Performance Gains from \texorpdfstring{$L_2$}{L} Regularization}
% \label{ap:e.2}

% Here we provide a possible explanation for the experimental results in Table \ref{tb:1} and Table \ref{tb:2}, which show why the performance of DEQL(L2) and DEQL(L2+zero-diag) surpasses that of plain DEQL. According to statistical learning theory \citep{vapnik1999nature}, searching for solutions within a large hypothesis space often leads to overfitting, while searching in a small hypothesis space may cause underfitting -- both cases resulting in poor testing performance. Structural risk minimization (SRM) \citep{vapnik1999nature} addresses this trade-off by controlling the size of the hypothesis space. In this context, both the $L_2$ regularizer and the zero-diagonal constraint can be interpreted as SRM techniques that restrict the hypothesis space. Let $\mathcal{U}_{\text{DEQL}}, \mathcal{U}_{\text{DEQL(L2)}}$ and $\mathcal{U}_{\text{DEQL(L2+zero-diag)}}$ 
% denote the hypothesis spaces of DEQL, DEQL(L2) and DEQL(L2+zero-diag), respectively. Then we have the nested relationship $\mathcal{U}_{\text{DEQL(L2+zero-diag)}} \subset \mathcal{U}_{\text{DEQL(L2)}} \subset \mathcal{U}_{\text{DEQL}}$. However, the hypothesis space that yields the best performance depends on the dataset, as reflected in the results: on some datasets DEQL(L2+zero-diag) performs better, while on others DEQL(L2) performs better.

\subsection{Practical Scalability of DEQL}
\label{ap:g3}

% Although we have provided a $O(n^{2.376})$ complexity analysis based on Coppersmith-Winograd algorithm in Appendix \ref{ap:b}, this complexity does not represent the practical performance of the algorithm. The reason is that, Coppersmith-Winograd algorithm is challenging to deploy in practice, and it only surpass an $O(n^3)$ algorithm for extremely large $n$ due to the large constant scalar. So in the context of practice, we mean using an $O(n^3)$ or $O(n^{2.81})$ algorithm (Strassen's) rather than the $O(n^{2.376})$ algorithm to compute matrix multiplication or inversion.

% With these algorithms as basis, t

The LAE models produced by DEQL or other methods are typically represented by an $n \times n$ matrix. Here, \textit{practical scalability} concerns how to handle, for large $n$, both the \textit{inefficiency} of the Fast Algorithm (Section \ref{sec:4}) used to compute the model and the associated \textit{memory prohibitivity} issues. Since computation is often performed on GPUs -- which generally have much smaller memory capacity than CPU RAM -- we discuss the scalability challenge at two levels of memory prohibitivity:

\textbf{1. The $n\times n$ matrix fits in CPU RAM but cannot fit in GPU memory}: At this level, we do not need to impose rank constraints, and the model can remain full-rank. The issue is computational: the Fast Algorithm depends on matrix multiplication and inversion, which are too slow on CPU. Using the GPU would accelerate these operations, but matrices such as $R$ or $R^TR$ are too large to load into GPU for a single-pass computation. Instead, we can use block matrix multiplication and block matrix inversion. These methods partition $R$ or $R^TR$ into blocks that fit in GPU memory and can be processed sequentially.

\textbf{2. The $n\times n$ matrix cannot fit in CPU RAM}: In this case, a low-rank LAE can be used instead of a full-rank model. Importantly, the number of parameters in an LAE can be freely scaled. For example, ELSA \citep{vanvcura2022scalable} represents the model as $A^TA - I \in \mathbb{R}^{n \times n}$ where $A \in \mathbb{R}^{l \times n}$. Although the final model is still an $n \times n$ matrix, the number of parameters depends on the size of $A$, which can be flexibly controlled through $l$. Choosing $l < n$ produces a low-rank LAE. Moreover, as shown in Appendix \ref{ap:2}, a low-rank DEQL closed-form solution can also be obtained in the restricted case $a = b$.

Additionally, to accelerate computation, one may replace the standard $O(n^3)$ matrix multiplication algorithm with Strassen's algorithm, which runs in $O(n^{2.81})$. Unlike the Coppersmith–Winograd algorithm discussed in Appendix \ref{ap:b}, Strassen’s algorithm remains practical to implement.

%Although Appendix \ref{ap:b} shows that the complexity can in principle be further reduced to $O(n^{2.376})$ using the Coppersmith–Winograd algorithm, this method is difficult to deploy in practice, and its asymptotic advantage only appears for extremely large $n$ far beyond practical scenarios.

%The LAEs, like EASE \cite{steck2019embarrassingly}, are typically represented by an $n\times n$ matrix, and their size is therefore restricted by $n$. This limits the scalability of the model. Several works have addressed this limitation. For example, ELSA \citep{vanvcura2022scalable} represents the model as $A^TA - I \in \mathbb{R}^{n \times n}$ where $A \in \mathbb{R}^{l \times n}$. Although the final model is still an $n \times n$ matrix, the number of parameters depends on the size of $A$, which can be flexibly controlled through $l$.

% \subsection{Scalability of LAE Models}
% The LAEs, like EASE \cite{steck2019embarrassingly}, are typically represented by an $n\times n$ matrix, and their size is therefore restricted by $n$. This limits the scalability of the model. Several works have addressed this limitation. For example, ELSA \citep{vanvcura2022scalable} represents the model as $A^TA - I \in \mathbb{R}^{n \times n}$ where $A \in \mathbb{R}^{l \times n}$. Although the final model is still an $n \times n$ matrix, the number of parameters depends on the size of $A$, which can be flexibly controlled through $l$.

\subsection{Advantages of Closed-Form Solutions}
\label{ap:g5}

DEQL provides a framework for computing and analyzing closed-form solutions. A closed-form solution guarantees the global optimum of the training objective, but it does \textit{not necessarily} yield the best test performance, as it may overfit the training data. In contrast, gradient-based optimization can rely on \textit{early stopping} as a regularization technique \citep{yao2007early} to mitigate overfitting, in which case the resulting model does \textit{not} minimize the training objective.

However, the closed-form solution has a distinct advantage in hyperparameter tuning. When hyperparameters are fixed, gradient-based training typically produces non-deterministic models due to the reliance on early stopping, which introduces randomness in addition to the randomness from data splitting. This compounded uncertainty makes it harder to assess whether a particular hyperparameter setting truly leads to good generalization.

By contrast, the closed-form solution is fully deterministic for any fixed hyperparameters, so the only source of randomness stems from the data itself. This substantially reduces uncertainty in model evaluation and makes hyperparameter tuning more reliable. Consequently, closed-form solutions enable cleaner hyperparameter selection and facilitate reproducibility.

\subsection{LAEs versus Deep Models}

Deep neural networks have become popular in industrial recommender systems due to their flexibility: their depth and width are not constrained by the dataset dimension $n$. In contrast, LAEs are typically represented by an $n \times n$ matrix, inherently tied to $n$. Although their parameter count can be scaled while preserving the $n \times n$ size -- such as the ELSA model described in Appendix \ref{ap:g3} -- these models remain comparatively shallow. As a result, deep models generally benefit from more flexible parameter scaling and architectural design, which often yields stronger representation power. 

However, this does not imply that LAEs always underperform deep networks. Empirical results show that LAE models outperform deep neural networks on sparse datasets \citep{steck2019embarrassingly, steck2020autoencoders, moon2023relaxing}. Such sparsity is common in recommender datasets (Table \ref{tb:0}), since most users and items are associated with only a few ratings. LAE models have been shown to effectively capture item-item correlations using a small number of high-confidence neighbors, thereby achieving high accuracy \citep{nikolakopoulos2021trust}. Many small- to medium-scale e-commerce platforms have abundant interaction logs but limited or no side features -- a scenario that characterizes a substantial portion of industrial deployments. In such environments, LAE-style models and matrix factorization remain both effective and operationally attractive \citep{ferrari2019we}.

Moreover, unlike deep learning based models, LAEs are typically less affected by the constant addition of users, items and ratings, which are typically observed in large commercial applications. For example, once item similarities have been computed, an item-based system can readily make recommendations to new users, without having to re-train the system \citep{nikolakopoulos2021trust}.

Finally, linear models are typically easier to analyze than deep models due to their simplicity and structural stability, which makes them widely used in interpretable AI \citep{ribeiro2016should}. In particular, LAEs have been regarded as interpretable recommender systems. \citep{spivsak2024interpretability} show that the magnitude of each $W_{ij}$, regardless of sign, represents the relatedness from item $i$ to item $j$; hence, item relationships as a directed graph, noting that the weight from $i$ to $j$ is not necessarily equal to the weight from $j$ to $i$. These interpretable relationships are a key reason why LAE-style models remain prominent in production systems, compared with deep learning-based models whose learned weights are often difficult to explain. Although DEQL was originally developed to enable closed-form theoretical analysis for LAE models, the resulting LAE solutions can also be naturally leveraged for interpretability at the application level.

%Finally, although DEQL is initially used for providing a closed-form solution analysis for LAE models, at application level, the LAE models obtained from DEQL can be adapted to a model for interprebility. In general, any LAEs serve as interpretability and diagnostic tools for broader recommendation pipelines, including deep models \citep{spivsak2024interpretability, huben2023sparse}: their item–item affinity matrix captures co-occurrence and influence patterns that support practical use cases such as “frequently bought together” recommendations, promotional bundling, and cross-selling workflows. These interpretable relationships form a major reason why LAE-style models remain prominent in production systems.

\subsection{Broader Scope of LAEs and DEQL}

%LAE models are mainly used for matrix completion, a fundamental mathematical problem with broad applications in collaborative filtering recommender systems \citep{shamir2014matrix, candes2010power}. DEQL, in turn, is a framework for closed-form analysis of LAE models. Consequently, in domains beyond recommender systems where matrix completion is relevant, both LAEs and the DEQL framework can be naturally applied. 

Below, we highlight several non-Recommender Systems application domains in which LAE-style models and DEQL may be particularly useful. These domains overlap with LAE-related topics such as matrix completion, linear regression, and autoencoders: 

\begin{itemize}[leftmargin=*]
    \item \textbf{Survey \& psychometric modeling; genomics \& biomedical panels}: In survey research, user $\times$ item response matrices exhibit substantial missingness and heterogeneous exposure, a setting long modeled using linear or matrix-completion methods \citep{mazumder2010spectral, yoon2018gain}. Similarly, genomics, metabolomics, and biomedical panels routinely rely on linear or low-rank imputation methods for patient $\times$ gene and panel-level data, including mass-spectrometry-based metabolomics \citep{stekhoven2012missforest, wei2018missing}.
    \item \textbf{Distributed Sensing \& Sensor Network}: In distributed sensing applications, sensortime matrices frequently contain missing measurements due to intermittent connectivity, power constraints, or sensor failures. Linear reconstruction and imputation methods continue to be widely used in these resource-constrained environments, where computational simplicity and interpretability are critical \citep{rivera2022deep}.
    \item \textbf{LLM Interpretability}: Sparse Autoencoders (SAEs) have emerged as a key tool for interpreting LLMs \citep{felarchetypal, huben2023sparse}. A typical SAE can be formulated as follows \citep{huben2023sparse}. Let $x$ be the input feature vector, $\hat{x}$ be the reconstructed feature vector, $c$ be the latent vector, and $W$ be the parameter matrix. The encoder is defined as $c = \phi(Wx + b)$ for some non-linear activation $\phi$, and the decoder is given by $\hat{x} = W^Tc$. The model learns a $W$ such that $x$ and $\hat{x}$ are close under a reconstruction loss, while $c$ is regularized to be sparse. The sparsity of $c$ facilitates the identification of important features, thereby improving interpretability. This is similar to a LAE model, where $W$ can be factorized into two matrices $W = A^TB$, with $c = Bx$ be the encoder and $\hat{x} = A^Tc$ be the decoder. In this formulation, the latent vector $c$ can be regularized to be sparse if desired.
    %probing classifiers, and certain linearized attention approximations. %These directly match DEQL’s weighted linear reconstruction structure (See Appendix \ref{ap:c}).
\end{itemize}

Finally, we note a conceptual connection to LLM preference learning (which is related to recommendation, though not necessarily restricted to linear models). While methods such as DPO differ substantially in their mathematical formulation, both LLM-based content generation and recommendation involve selecting or ranking items from large discrete spaces based on preference signals \citep{rafailov2023direct, rendle2009bpr}.

Although direct application would require substantial methodological development beyond DEQL's current scope, these connections suggest interesting directions for future research.

\section*{LLM Usage Statement}

We use ChatGPT solely for polishing writing at the sentence and paragraph level. The content and contributions of this paper were created by the authors. All text refined with ChatGPT has been carefully checked to avoid factual errors.

% \subsection{Broader Impacts}
% Personalization and recommendation systems are pivotal in shaping the future of AI, impacting diverse sectors such as e-commerce, healthcare, and entertainment. While deep learning-based recommendation models have seen significant practical success, their theoretical foundations remain underexplored, especially in comparison to the well-established body of statistical learning theory. A notable gap exists in understanding linear models, which despite their simplicity, deliver remarkable performance in real-world applications. This lack of theoretical clarity raises questions about their generalizability and limitations. This paper takes an important step toward addressing these issues by offering a more rigorous theoretical framework for linear recommendation models. By providing a deeper understanding of these models from a statistical learning perspective, this work aims to enhance the efficiency, interpretability, and robustness of future recommendation systems, thus advancing the field of AI personalization.

\end{document}